\newcommand{\mycolor}{black!40!blue!66!green!40!white}
\let\blx@rerun@biber\relax
\newcommand{\norm}[1]{\left\lVert#1\right\rVert}
\newtheorem{theorem}{Theorem}
\newtheorem{lemma}[theorem]{Lemma}
\tikzset{every picture/.style={line width=0.75pt}}
\DeclareMathOperator*{\argmax}{arg\,max}
\DeclareMathOperator*{\argmin}{arg\,min}
\newcommand{\F}{\mathcal{F}}
\newcommand{\Ftwo}{\mathcal{F}^{(2)}}
\newcommand{\Fone}{\mathcal{F}^{(1)}}
\newcommand{\NF}{\mathcal{NF}}
\newcommand{\NFtwo}{\mathcal{NF}^{(2)}}
\newcommand{\NFone}{\mathcal{NF}^{(1)}}
\newcommand{\pol}{\mathcal{Q}}
\newcommand{\dist}{\mathrm{dist}}
\newcommand{\mX}{\mathcal{X}}
\newcommand{\mXb}{\bar{\mathcal{X}}}
\title{\textbf{Detection of Iterative Adversarial Attacks\\ via Counter Attack}\medskip}
\author{Matthias Rottmann\footnote{University of Wuppertal, School of Mathematics and Natural Sciences, \{rottmann,kmaag,hgottsch\}@uni-wuppertal.de},
Kira Maag\footnotemark[1],
Mathis Peyron\footnote{ENSEEIHT Toulouse, Department of HPC and Big Data, mathis.peyron@etu.enseeiht.fr},
Nata\v{s}a Kreji\'{c}\footnote{University of Novi Sad, Faculty of Sciences, natasa@dmi.uns.ac.rs} \, and Hanno Gottschalk\footnotemark[1]}
\date{}
\begin{document}

\maketitle

\begin{abstract}
Deep neural networks (DNNs) have proven to be powerful tools for processing unstructured data. However for high-dimensional data, like images, they are inherently vulnerable to adversarial attacks. Small almost invisible perturbations added to the input can be used to fool DNNs. Various attacks, hardening methods and detection methods have been introduced in recent years. Notoriously, Carlini-Wagner (CW) type attacks computed by iterative minimization belong to those that are most difficult to detect. 
In this work we outline a mathematical proof that the CW attack can be used as a detector itself. That is,  
under certain assumptions and in the limit of attack iterations this detector provides asymptotically optimal separation of original and attacked images. In numerical experiments, we experimentally validate this statement and furthermore obtain AUROC values up to $99.73\%$ on CIFAR10 and ImageNet. This is in the upper part of the spectrum of current state-of-the-art detection rates for CW attacks.
\end{abstract}

\section{Introduction}

For many applications, deep learning has shown to outperform concurring machine learning approaches by far \cite{NIPS2012_4824,Simonyan14c,DBLP:journals/corr/HeZRS15}. Especially, when working with high-dimensional input data like images, deep neural networks (DNNs) show impressive results. As discovered by 
\cite{Szegedy2013IntriguingPO}, this remarkable performance comes with a downside. Very small (noise-like) perturbation added to an input image can result in incorrect predictions with high confidence \cite{Szegedy2013IntriguingPO,Goodfellow2014ExplainingAH}.
Such adversarial attacks are usually crafted by performing a projected gradient descent method for solving a constrained optimization problem. The optimization problem is formulated as the least change of the input image that yields a change in the class predicted by the DNN. The new class can be either a class of choice (targeted attack) or an arbitrary but different one (untargeted attack).
Many other types of attacks such as the fast signed gradient method \cite{Goodfellow2014ExplainingAH} and DeepFool \cite{DBLP:journals/corr/Moosavi-Dezfooli15} have been introduced, but these methods do not fool DNNs reliably.
Carlini \& Wagner (CW) extended \cite{Szegedy2013IntriguingPO} with a method that reliably attacks deep neural networks
while controlling important features of the attack like sparsity and maximum size over all pixels, see \cite{DBLP:journals/corr/CarliniW16a}. This method aims at finding a targeted or an untargeted attack where the distance between the attacked image and the original one is minimal with respect to a chosen $\ell_p$ distance. Distances of choice within in CW-type frameworks are mostly $\ell_p$ with $p=0,2,\infty$. Mixtures of $p=1,2$ have also been proposed by \cite{Chen2017EADEA}. Except for the $\ell_0$ distance, these attacks are perceptually extremely hard to detect, cf.~\cref{fig:attcifar}. Minimization of the $\ell_0$ distance minimizes the number of pixels changed, but also changes these pixels maximally and the resulting spikes are easy to detect. This changes for $p>0$.
Typically, one distinguishes between three different attack scenarios.
\emph{White box} attack: the attacker has access to the DNN's parameters / the whole framework including defense strategies.
\emph{Black box} attack: the attacker does not know the DNN's parameters / the whole framework including defense strategies.
\emph{Gray box}  attack: in-between white and black box, e.g.\ the attacker might know the framework but not the parameters used.
The CW attack currently is one of the most efficient white-box attacks.

\subsection{Defense Methods}
Several defense mechanisms have been proposed to either harden neural networks or to detect adversarial attacks. One such hardening method is the so-called \emph{defensive distillation}~\cite{DBLP:journals/corr/PapernotMWJS15}. With a single re-training step this method provides strong security, however not against CW attacks.
Training for robustness via adversarial training is another popular defense approach, see e.g.~\cite{Goodfellow2014ExplainingAH,Madry2017TowardsDL,pmlr-v97-wang19i,DBLP:journals/corr/KurakinGB16a,Metzen2017OnDA}. See also \cite{REN2020346} for an overview regarding defense methods.
Most of these methods are not able to deal with attacks based on iterative methods for constrained minimization like CW attacks. In a white box setting the hardened networks can still be susceptible to fooling.

\subsection{Detection Methods}
There are numerous detection methods for adversarial attacks. In many works, it has been observed and exploited that adversarial attacks are less robust to random noise than non-attacked clean samples, see e.g.~\cite{DBLP:journals/corr/AthalyeS17,DBLP:journals/corr/abs-1711-01991,DBLP:journals/corr/abs-1904-00689,DBLP:journals/corr/abs-1902-04818}. This robustness issue can be utilized by detection and defense methods. In \cite{DBLP:journals/corr/abs-1902-04818}, a statistical test is proposed for a white-box setup. Statistics are obtained under random corruptions of the inputs and observing the corresponding softmax probability distributions.
An adaptive noise reduction approach for detection of adversarial examples is presented in \cite{DBLP:journals/corr/LiangLSLSW17}. JPEG compression \cite{DBLP:journals/corr/abs-1803-00940,DBLP:journals/corr/abs-1803-05787}, similar input transformations \cite{DBLP:journals/corr/abs-1711-00117} and other filtering techniques \cite{Lee2018DefensiveDM} have demonstrated to filter out many types of adversarial attacks as well. 
%
%
Some approaches also work with several images or sequences of images to detect adversarial images, see e.g.~\cite{DBLP:journals/corr/abs-1907-05587,DBLP:journals/corr/GrosseMP0M17}.
Approaches that not only take input or output layers into account, but also hidden layer statistics, are introduced in \cite{Carrara_2018_ECCV_Workshops,Zheng:2018:RDA:3327757.3327888}.
Auxiliary models trained for robustness and equipped for detecting adversarial examples are presented in \cite{DBLP:journals/corr/abs-1905-11475}.
Recently, GANs have demonstrated to defend DNNs against adversarial attacks very well, see \cite{DBLP:journals/corr/abs-1805-06605}. This approach called defense-GAN iteratively filters out the adversarial perturbation. The filtered result is then presented to the original classifier. 
Semantic concepts in image classification tasks pre-select image data that only cover a tiny fraction of the expressive power of rgb images. Therefore training datasets can be embedded in lower dimensional manifolds and DNNs are only trained with data within the manifolds and behave arbitrarily in perpendicular directions. These are also the directions used by adversarial perturbations. Thus, the manifold distance of adversarial examples can be used as criterion for detection in~\cite{8599691}.
As opposed to many of the adversarial training and hardening approaches, most of the works mentioned in this paragraph are able to detect CW attacks with AUROC values up to 99\% \cite{DBLP:journals/corr/LuIF17}. For an overview we refer to \cite{DBLP:journals/ijautcomp/XuMLDLTJ20}.


The idea of detecting adversarial attacks by applying another attack (\emph{counter attack}) to both attacked and non-attacked data (images as well as text), instead of applying random perturbations as previously discussed, was first proposed in \cite{Worzyk1,Worzyk2} (termed $\mathit{adv}^{-1}$) and re-invented by \cite{undercover} (termed \emph{undercover attack}). The idea is to measure the perturbation strength of the counter attack in a given $\ell_p$ norm and then to discriminate between $\ell_p$ norms of perturbations for images that were attacked previously and images that are still original (non-attacked). The motivation of this approach is similar to the one for utilizing random noise, i.e., attacked images tend to be less robust to adversarial attacks than original images.
Indeed there is a difference between the methods proposed by \cite{Worzyk1,Worzyk2} and \cite{undercover}. While the former measure counter attack perturbation norms on the input space, the latter do so on the space of softmax probabilities.
All three works are of empirical nature and present results for a range of attack methods (including FGSM and CW) where they apply the same attack another time (to both attacked and non-attacked data) and also perform cross method attacks, achieving detection accuracies that in many test are state-of-the-art and beyond.
As opposed to many other types of detection methods, counter attack methods tend to detect stronger attacks such as the CW attack much more reliably than less strong attacks. Here, strong refers to attacks that fool DNNs reliably while producing very small perturbations.

\subsection{Our Contribution} \label{sec:ourcontr}
In this paper we establish theory for the CW attack and the counter attack framework \cite{Worzyk1,Worzyk2} applied to the CW attack. As opposed to other types of attack (not based on constraint optimization) the inherent properties of the CW  attack constitute an alternative motivation for the counter attack method that turns out to lead to provable detection:
\begin{figure}[tb]
    \centering
    \scalebox{0.55}{
    \begin{tikzpicture}[thick,font=\sffamily\Large]
\draw[fill=black!0!white,color=black!0!white] (0.5,-3) --  (-6,-6) -- (-6,1.5) -- (6,1.5) -- (6,-4) -- cycle;
\draw[fill=black!20!orange!15!white,color=black!20!orange!15!white] (0.5,-3) --  (-6,-6) -- (-6,-6.5) -- (6,-6.5) -- (6,-4) -- cycle;

\draw[very thick] (0.5,-3) --  (-6,-6);
\draw[very thick] (0.5,-3) --  (6,-4);

\node [fill, draw, circle, minimum width=3pt, inner sep=0pt, pin={[outer sep=1.5pt, pin distance=120pt]175:$x^*$}] at (0.5,-3) {};
\node [fill, draw, circle, minimum width=3pt, inner sep=0pt, pin={[outer sep=1.5pt, pin distance=120pt]180:$x_0$}] (x0) at (0.465,0.5) {};
\node [fill, draw, circle, minimum width=3pt, inner sep=0pt, pin={[outer sep=1.5pt, pin distance=120pt]185:$x_k$}] (xk) at (0.55,-3.5) {};
\node [fill, draw, circle, minimum width=3pt, inner sep=0pt, pin={[outer sep=1.5pt, pin distance=80pt]10:$x_{k,j}$}] (xkj) at (1,-2.5) {};

\draw[->,black,dashed] (x0) to [bend right] node [midway,left]{attack} (xk) ;
\draw[->,black,dashed] (xk) to [bend right] node [midway,above right]{counter attack} (xkj) ;

\node at (4.5,-5.5) {class $j$};
\node at (4.5,0.5) {class $i$};
\end{tikzpicture}
}
    \caption{Sketch of the action of the counter attack, $x^*$ is the stationary point of the first attack, $x_k$ the final iterate of the first attack and $x_{k,j}$ the final iterate of the second attack.}
    \label{fig:repeatedAttack}
\end{figure}
The CW attack minimizes the perturbation in a chosen norm such that the class prediction changes. Thus, an iterative sequence is supposed to terminate closely behind the decision boundary. This also holds for the counter attack. Therefore, in a statistical sense, the counter attack is supposed to produce much smaller perturbations on already attacked images than on original images, see also \cref{fig:repeatedAttack}. This point of view does not necessarily generalize to other types of attacks that are not based on constrained optimization.
With this work, we are the first to mathematically prove the following results for the $\ell_2$ CW attack: 
\begin{itemize}
\item For specific choice of penalty term the CW attack converges to a stationary point.
\item 
If the CW attack is successful (the predicted class of the attacked image is different from the prediction for the original image), the CW attack converges to a stationary point on the class boundary.
\item 
If the counter attack is started at an iteration of the first attack that is sufficiently close to the limit point of the attack sequence,
the counter attack asymptotically perfectly separates attacked and non-attacked images by means perturbation strength, i.e., by $\ell_p$ norm of the perturbation produced by the counter attack.
\end{itemize}
We are therefore able to explain many of the results found in  \cite{Worzyk1,Worzyk2,undercover}. It is exactly the optimality of the CW attack that leaves a treacherous footprint in the attacked data. Note that we do not claim that the counter attack cannot be bypassed as well, however our work contributes to understanding the nature of CW attacks and the corresponding counter attack.

We complement the mathematical theory with numerical experiments on 2D examples where 
the assumptions of our proofs are satisfied and study the relevant parameters such as the number of iterations of the primary attack, initial learning rate of the secondary attack and others. We
consecutively step by step 
relax the assumptions, showing that the detection accuracy in terms of area under the receiver operator curve (AUROC) remains still close to $100\%$.  This is complemented with a study on the dependence of the problem dimension. Lastly, we present numerical results for the CW (counter) attack with default parameters applied to CIFAR10 and ImageNet that extend the studies found in \cite{Worzyk1,Worzyk2,undercover}. We again study the dependence on the number of primary attack iterations and furthermore perform cross attacks for different $\ell_p$ norms that maintain high AUROC values of up to $99.73\%$.

\subsection{Related Work}
Works on counter attacks published so far \cite{Worzyk1,Worzyk2,undercover} treat the counter attack idea solely experimentally. In general, the counter attack can be viewed as a specialization of the methods that proceed analogously but use randomized noise instead \cite{DBLP:journals/corr/AthalyeS17,DBLP:journals/corr/abs-1711-01991,DBLP:journals/corr/abs-1904-00689,DBLP:journals/corr/abs-1902-04818}. While \cite{Worzyk1,Worzyk2,undercover} present results for different attacks utilized in the counter-attack framework, we focus on the CW attack with theoretical foundation. While we methodologically proceed similarly to \cite{Worzyk1}, that work studies the method more broadly and does not consider parameter dependencies, such as the number of iterations and penalization strengths. We show in the present work that these parameters are crucial for guaranteeing the success of the counter attack. The works \cite{Worzyk2,undercover} put additional classifiers on top and consider different input metrics, e.g.\ different $\ell_p$ norms of the perturbation, aiming at improving the empirical results. In our experiments we focus on providing additional insight specifically tailored to our theoretical findings. We complement this with experiments on CIFAR10 and ImageNet. The latter has not yet been considered in counter attack experiments and w.r.t.\ the former we focus on the parameter dependency and cross attacks between different $\ell_p$ norms in the optimization objective.
Neither has been considered in the related works.
Conceptually, \cite{8599691} can also be regarded as close to our approach. The authors measure the distance from a manifold containing the data, therefore a model of the manifold is learned. Our method can be interpreted as thresholding on the distance that is required to find the closest decision boundary outside the given manifold that contains the image data. Unlike in \cite{8599691}, our approach does not require a model of the manifold and is rather based the intrinsic properties of CW attacks and other constraint minimization attacks. 


\begin{figure}[t]
\centering
\scalebox{0.81}{
\begin{tabular}{r c | c }
 $\ell_0$ &
 \includegraphics[width=1.3cm]{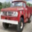} 
 \includegraphics[width=1.3cm]{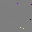}  \includegraphics[width=1.3cm]{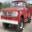}
 &
 \includegraphics[width=1.3cm]{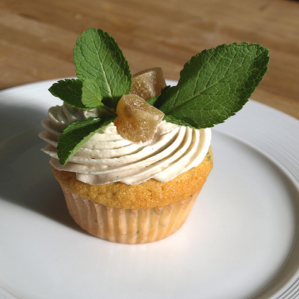}
 \includegraphics[width=1.3cm]{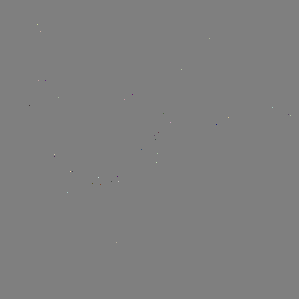}
 \includegraphics[width=1.3cm]{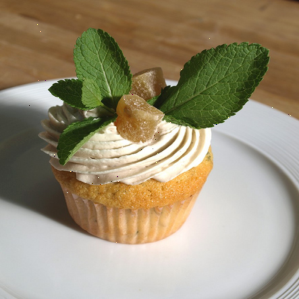}
 \\ 
 $\ell_2$&
 \includegraphics[width=1.3cm]{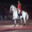}
 \includegraphics[width=1.3cm]{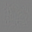} \includegraphics[width=1.3cm]{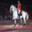}
 &
 \includegraphics[width=1.3cm]{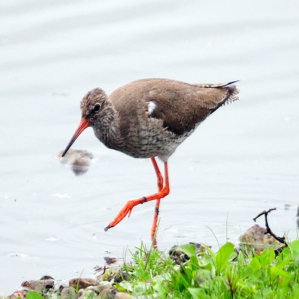}
 \includegraphics[width=1.3cm]{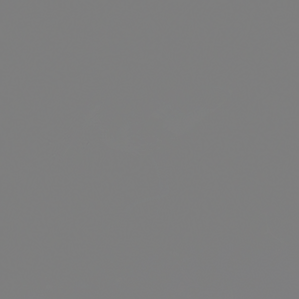} \includegraphics[width=1.3cm]{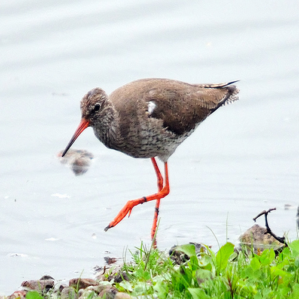}
 \\
 $\ell_\infty$&
 \includegraphics[width=1.3cm]{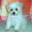}
 \includegraphics[width=1.3cm]{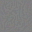}
 \includegraphics[width=1.3cm]{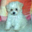}
 &
 \includegraphics[width=1.3cm]{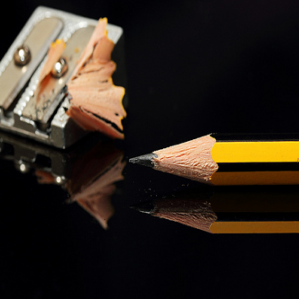}
 \includegraphics[width=1.3cm]{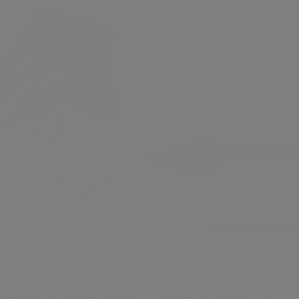} \includegraphics[width=1.3cm]{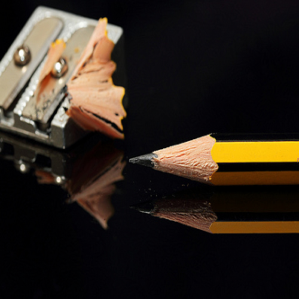}
\end{tabular}}
\caption{\label{fig:attcifar} An illustration of attacked images for CIFAR10 (left) and ImageNet2012 (right).
Each package of three images contains from left to right: input image, attack and the resulting adversarial image.}
\end{figure}

\if 0
\begin{figure}[t]
\centering
\scalebox{0.85}{
\begin{tabular}{p{1em} c }
 $\ell_0$ & 
 \includegraphics[width=1.3cm]{ImagesNet2012Images/input_image_91.png}
 \includegraphics[width=1.3cm]{ImagesNet2012Images/noise_91.png}
 \includegraphics[width=1.3cm]{ImagesNet2012Images/attacked_image_91.png} \\ 
 $\ell_2$ &
 \includegraphics[width=1.3cm]{ImagesNet2012Images/input_image_20.png}
 \includegraphics[width=1.3cm]{ImagesNet2012Images/noise_20.png} \includegraphics[width=1.3cm]{ImagesNet2012Images/attacked_image_20.png} \\
 $\ell_\infty$ &
 \includegraphics[width=1.3cm]{ImagesNet2012Images/input_image_7.png}
 \includegraphics[width=1.3cm]{ImagesNet2012Images/noise_7.png} \includegraphics[width=1.3cm]{ImagesNet2012Images/attacked_image_7.png}
\end{tabular}}
\caption{\label{fig:attimagenet} Same as \cref{fig:attcifar}, but for ImageNet2012.}
\end{figure}
\fi


\section{Detection by Counter Attack} \label{sec:optdec}

Let $C=\{1,\ldots,c\}$ denote the set of $2 \leq c \in \mathbb{N}$ distinct classes and let $I=[0,1]$ denote the closed unit interval. An image is an element $x \in I^n$. Let $\varphi: I^n \to I^c$ be a continuous function given by a ReLU-DNN, almost everywhere differentiable, that maps $x$ to a probability distribution $y = \varphi(x) \in I^c$ with $\sum_{i=1}^c y_i = 1$ and $y_i \geq 0$.
Note that this includes convolutional layers (among others) as they can be viewed as fully connected layers with weight sharing and sparsity.
Furthermore, $\kappa: I^n \to C$ denotes the map that yields the corresponding class index, i.e.,
\begin{equation}
\kappa(x) = \begin{cases}
               i & \text{if } y_i > y_j \; \forall j \in C \setminus\{i\} \\
               0 & \text{else.}
           \end{cases}
\end{equation}
This is a slight modification of the $\argmax$ function as $K_i = \{ x \in I^n \, : \, \kappa(x) = i \}$ for $i>0$ gives the set of all $x \in I^n$ predicted by $\varphi$ to be a member of class $i$ and for $i=0$ we obtain the set of all class boundaries with respect to $\varphi$. Given $x \in \mathbb{R}^{n}$, the $\ell_p$ ``norm'' is defined as follows: for $p \in \mathbb{N}$, $\norm{x}_p = \left( \sum_{i=1}^{n} |x_{i}|^{p} \right)^{\frac{1}{p}} $,
for $p = 0$, $\norm{x}_0 = |\{ x_i > 0 \}|$,
and for $p=\infty$, $\norm{x}_\infty = \max_i |x_i| $.

The corresponding $\ell_p$ distance measure is given by $\mathit{dist}_p(x,x') = \norm{x-x'}_p$ and the $n$-dimensional open $\ell_p$-neighborhood with radius $\varepsilon$ and center point $x_0 \in \mathbb{R}^n$ is $B_p(x_0,\varepsilon) = \{ x \in \mathbb{R}^n \, : \, \mathit{dist}_p(x,x_0) < \varepsilon \}$.

\subsection{The CW Attack}
For any image $x_0 \in I^n$ the CW attack introduced in \cite{DBLP:journals/corr/CarliniW16a} can be formulated as the following optimization problem:
\begin{equation} \label{eq:CWopt}
\begin{aligned}
    x_k =\ &\underset{x \in \mathbb{R}^{n}}{\argmin} \; \mathit{dist}_p(x_0,x) \\
    &\text{s.th.} ~~ \kappa(x) \neq \kappa(x_0) \; x \in I^n.
\end{aligned}
\end{equation}
Several reformulations of \cref{eq:CWopt} are proposed in \cite{DBLP:journals/corr/CarliniW16a}. The condition $\kappa(x) \neq \kappa(x_0)$ is replaced by a differentiable function $f(x)$. The reformulation we adopt here is based on the penalty approach. The objective in \cref{eq:CWopt} is replaced by $\mathit{dist}_p(x_0,x)^p$ for $p \in\mathbb{N}$ and left unchanged for $p=0,\infty$. Therefore, the problem we consider from now on is 
\begin{equation} \label{eq:relaxed}
F(x) := \mathit{dist}_p(x_0,x)^p + a \, f(x) \to \min, x \in I^n
\end{equation}
for the penalty parameter $a$ large enough.
In their experiments, Carlini \& Wagner perform a binary search for the constant $a$ such that after the final iteration $\kappa(x) \neq \kappa(x_0)$ is almost always satisfied. 
For further algorithmic details we refer to \cite{DBLP:journals/corr/CarliniW16a}. Two illustrations of attacked images are given in \cref{fig:attcifar}. In particular for the ImageNet dataset even the perturbations themselves are imperceptible.

\subsection{The CW Counter Attack}

The goal of the CW attack is to
find a minimizer of the distance of $x_0$ to the closest class boundary via $\mathit{dist}_p(x_0,x_k)\to\min$ under the constraint that $x_k$ lies beyond the boundary. The more eager the attacker minimizes the perturbation $x_k - x_0$ in a chosen $p$-norm, the more likely it is that $x_k$ is close to a class boundary. Hence, when estimating the distance of $x_k$ to the closest class boundary by performing another CW attack 
$x_{k,j}$ that is started at $x_k$, it is 
to be expected that
\begin{equation}
    \mathit{dist}_p(x_k,x_{k,j}) \ll \mathit{dist}_p(x_0,x_k) \, ,
\end{equation}
for $j$ sufficiently large, cf.~\cref{fig:repeatedAttack}. This motivates our claim, that the CW attack itself is a good detector for CW attacks. A non-attacked image $x_0 \in I^n$ is likely to have a greater distance to the closest class boundary than an $x_k \in I^n$ which has already been exposed to a CW attack.
In practice, it cannot be guaranteed that the CW attack finds a point $x_k$ which is close to the closest boundary. However, we can guarantee to find a stationary point of the problem in \cref{eq:relaxed} that in case of a successful attack ($\kappa(x)\neq\kappa(x_0)$) is guaranteed to lie on the decision boundary. 

\def\secondSituation{
\draw[dashed, color={rgb, 255:red, 208; green, 104; blue, 27 }, line width=1.5, opacity=0.4 ] (340,180) ellipse (80 and 80);
\draw[fill, color={rgb, 255:red, 208; green, 104; blue, 27 }, opacity=0.2 ] (340,180) ellipse (80 and 80);
\draw[<->, color={rgb, 255:red, 208; green, 104; blue, 27 }] (340,180) -- (340,100) node[right,midway] {$\varepsilon$};
\draw  [color={rgb, 255:red, 208; green, 104; blue, 27 } , opacity=0.4 ][line width=1.5]  (330,180) -- (350,180)(340,170) -- (340,190) ;
\draw   (50.49,-7.52) .. controls (281.49,12.17) and (338.39,285.18) .. (569.39,304.87) ;
\draw  [color={rgb, 255:red, 208; green, 2; blue, 27 }  ,draw opacity=1 ][line width=1.5]  (467.54,63.5) -- (488.12,63.5)(477.83,54) -- (477.83,73) ;
\draw  [color={rgb, 255:red, 208; green, 2; blue, 27 }  ,draw opacity=1 ][line width=1.5]  (308.62,200.5) -- (329.2,200.5)(318.91,191) -- (318.91,210) ;
\draw [color={rgb, 255:red, 144; green, 19; blue, 254 }  ,draw opacity=1 ][line width=1.5]  [dash pattern={on 5.63pt off 4.5pt}]  (477.59,65) .. controls (493.7,127.37) and (347.49,124.07) .. (325.06,193.86) ;
\draw [shift={(324.42,196)}, rotate = 285.6] [color={rgb, 255:red, 144; green, 19; blue, 254 }  ,draw opacity=1 ][line width=1.5]    (14.21,-4.28) .. controls (9.04,-1.82) and (4.3,-0.39) .. (0,0) .. controls (4.3,0.39) and (9.04,1.82) .. (14.21,4.28)   ;

\draw [color={rgb, 255:red, 80; green, 166; blue, 35 }  ,draw opacity=1 ][line width=1.5]  [dash pattern={on 5.63pt off 4.5pt}]  (311.97,193) .. controls (279.14,171.44) and (274.8,123.95) .. (316.09,122.06) ;
\draw [shift={(318.67,122)}, rotate = 180] [color={rgb, 255:red, 80; green, 166; blue, 35 }  ,draw opacity=1 ][line width=1.5]    (14.21,-4.28) .. controls (9.04,-1.82) and (4.3,-0.39) .. (0,0) .. controls (4.3,0.39) and (9.04,1.82) .. (14.21,4.28)   ;

\draw  [color={rgb, 255:red, 208; green, 2; blue, 27 }  ,draw opacity=1 ][line width=1.5]  (317.24,124.5) -- (337.82,124.5)(327.53,115) -- (327.53,134) ;
\draw[<->, color={rgb, 255:red, 0; green, 0; blue, 0 }] (340,180) -- (477.83,63.5) node[left, near end] {$\delta$};

\draw (499.92,45) node [scale=1] [align=left] {$x_0$};
\draw (109.54,148) node [scale=1] [align=left] {$\NF$};
\draw (368.93,34) node [scale=1] [align=left] {$\F$};
\draw (433.08,145) node [color={rgb, 255:red, 144; green, 19; blue, 254 }  ,opacity=1 ] [align=left] {attack no.\ 1};
\draw (323.94,228) node [scale=1] [align=left] {$x_\textit{adv}$};
\draw (291.04,98) node [scale=1] [align=left] {$x_\textit{adv2}$};
\draw (238.18,155) node [color={rgb, 255:red, 80; green, 166; blue, 35 }  ,opacity=1 ] [align=left] {attack no.\ 2};
}

\subsection{Theoretical Considerations} \label{sec:theocons}

We now turn to the proof of what has been announced in \cref{sec:ourcontr}.
%
Let $t:= \kappa(x_0)$ denote the original class. We assume that $p=2$ and fix a choice for $f$ which is
\begin{equation} \label{eq:penality1}
    f(x) =  \max\{Z_t(x) - \max_{i\neq t} \{ Z_i(x) \}, \, 0 \} \, ,  
\end{equation}
where $Z$ denotes the neural network $\varphi$, but without the final softmax activation function. Note that $f$ in \cref{eq:penality1} is a construction for an untargeted attack. Choosing a penalty term for a targeted attack does not affect the arguments provided in this section, the convergence on the CW attack to a stationary point only requires assumptions on the geometry of $\mathrm{Im}(f)$ that are fulfilled for both the targeted and the untargeted attacked.
Furthermore, let 
$\F := \{ x \in I^n \, : \, \kappa(x) \neq t \} $ the feasible region of \eqref{eq:relaxed}, $\NF := \{ x \in I^n \, : \, \kappa(x) = t \} $ the infeasible region, and $\partial \F = \partial \NF$ the class boundary between $\F$ and $\NF$. It is common knowledge that for a ReLU-DNN $I^n$ can be decomposed into a finite number of polytopes $\pol := \{ Q_j \, : \, j=1,\ldots,s \}$ such that $Z$ is affine on each $Q_j$, 
see \cite{HeinPolytopes,UnderstandingML}. In \cite{HeinPolytopes}, each of the polytopes $Q_j$ is expressed by a set of linear constraints, therefore being \emph{convex}. 
For a locally Lipschitz function $g$, the \emph{Clarke generalized gradient} is defined by
\begin{equation}
    \nabla_C g(x) := \mathrm{conv} \{ \lim_{i \to \infty} \nabla g(x_i) : x_i \to x \wedge \exists \nabla  g(x_i) \} \, ,
\end{equation}
where $\mathrm{conv}$ stands for the convex hull of a set of vectors. Let $ S $ be the stationary set of problem \eqref{eq:CWopt},
\begin{align}
   S =  \{ x \in I^n: 0 \in  \nabla_C F(y) + N_{I^n}(x)\} 
\end{align}
where $ \nabla_C F(x) $ is the set of all generalized gradients and $ N_{I^n}(x) = \{ z \in \mathbb{R}^n : z^T(z'-x) \leq 0 \; \forall z' \in I^n \} $ is the normal cone to the set $ I^n $ at point $x$. Note that $N_{I^n}(x) = \emptyset$ for $x \in (0,1)^n$. 
In general, computing the set of generalized gradients is not an easy task, but given the special structure of $ f $ -- piece-wise linear -- it can be done relatively easily in this case. Namely, the set $ \nabla_C f(x) $ is a convex hull of the gradients of linear functions that are active at the point $x$, \cite{Sch12}. Therefore, by \cite{Clarke} (Corollary2, p.39), the generalized gradient  $ G(x) $ of $ F(x) $ has the form 
\begin{equation}
G(x) = \{ a g(x) + 2(x-x_0), \; g(x) \in \nabla_C f(x) \} \, .    
\end{equation}
The projected generalized gradient method \cite{Solodov} is defined as follows. Let $ P(x) $ denote the orthogonal projection of $x $ onto $ I^n. $ Given a non increasing learning schedule $ \{\alpha_k\} $ such that
\begin{equation} \label{learning}  \lim_{k \to \infty} \alpha_k = 0 \mbox{ and } \sum_{k=1}^{\infty} \alpha_k = \infty, 
\end{equation}  the iterative sequence is generated as 
\begin{equation} \label{eq:pg}
x_{k+1} = P(x_k - \alpha_k G_k) , \; G_k \in \nabla_C F(x^k) 
\end{equation}
for $k=0,1,\ldots$. In this paper we add the condition $\sum_{k=1}^{\infty} \alpha^2_k < \infty \, , $
which strengthens the conditions \eqref{learning} and does not alter the statements from \cite{Solodov}. 

\begin{theorem} \label{theorem:stationarypoint}
Under the assumptions outlined above, every accumulation point of the sequence $ \{x_k\} $ in $(0,1)^n$ generated by (\ref{eq:pg}) converges to a stationary point of $F$.
\end{theorem}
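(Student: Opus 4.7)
The plan is to invoke the convergence theorem for the projected generalized gradient method developed in \cite{Solodov}, whose statement is essentially what is being claimed here; the role of our strengthened step size condition $\sum_k \alpha_k^2 < \infty$ is to tighten the bound on the error term in the descent inequality of \cite{Solodov} without altering the remainder of that proof. I would therefore concentrate on verifying that all hypotheses of \cite{Solodov} are met by our specific problem, and then translate the output of the theorem into the statement above.

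The hypotheses I would verify are: (i) $F$ is locally Lipschitz on the compact convex set $I^n$, which is immediate since $f$ is piecewise linear on the polytope decomposition $\pol$ \cite{HeinPolytopes} and $\mathit{dist}_2(x_0,\cdot)^2$ is smooth; (ii) $F$ is bounded below on $I^n$, which holds because $F\geq 0$ and $I^n$ is compact; (iii) the set-valued map $\nabla_C F$ is nonempty-valued, convex- and compact-valued, and upper semicontinuous, which follows from the explicit formula $G(x)=\{a\,g(x)+2(x-x_0):g(x)\in\nabla_C f(x)\}$ together with the observation that on each $Q_j\in\pol$ the generalized gradient of $f$ is the convex hull of the finitely many gradients of linear pieces active at $x$, by \cite[Cor.~2, p.~39]{Clarke}; (iv) the iterates $\{x_k\}$ are bounded, automatic since $P$ projects onto $I^n$ at each step. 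The uniform bound $\sup_k \|G_k\|<\infty$ then follows from (iii) and compactness of $I^n$, which feeds into the $\sum_k \alpha_k^2\|G_k\|^2<\infty$ error control.

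Given these verifications, the conclusion of \cite{Solodov} applies and states that every accumulation point $\bar x$ of $\{x_k\}$ satisfies
\begin{equation*}
0\in \nabla_C F(\bar x)+N_{I^n}(\bar x),
\end{equation*}
i.e.\ $\bar x\in S$. For accumulation points $\bar x\in (0,1)^n$, the interior assumption makes $N_{I^n}(\bar x)=\{0\}$, so stationarity collapses to $0\in\nabla_C F(\bar x)$, which is exactly the asserted stationarity of $F$.

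The main obstacle lies in the verification of upper semicontinuity of $\nabla_C F$ across the boundaries of polytopes in $\pol$: as $x_k$ crosses an interface between two cells the active linear pieces of $Z$ change, and one must confirm that in the limit the convex hull of the set of gradients of active pieces contains every possible limit of the selected subgradients $G_k$. This is where the convex-hull definition of the Clarke gradient together with the piecewise linearity of $f$ on $\pol$ is essential; once this structural lemma is in place, the rest of the proof reduces to a direct reproduction of the argument in \cite{Solodov} specialized to our $F$, with the summability of $\alpha_k^2$ slotting into the usual estimate $\|x_{k+1}-x^*\|^2 \leq \|x_k-x^*\|^2 - 2\alpha_k\langle G_k, x_k-x^*\rangle + \alpha_k^2\|G_k\|^2$ to make the tail errors vanish unconditionally.
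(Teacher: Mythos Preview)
Your high-level strategy---reduce everything to an application of \cite{Solodov} after checking its hypotheses---is exactly what the paper does. The divergence is in \emph{which} hypothesis carries the weight.

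You identify upper semicontinuity of $\nabla_C F$ across polytope interfaces as ``the main obstacle''. But upper semicontinuity of the Clarke generalized gradient is a standard property that holds for \emph{every} locally Lipschitz function (see \cite{Clarke}); once you have established local Lipschitzness in your item~(i), upper semicontinuity comes for free and needs no separate structural lemma. So the thing you flag as the crux is in fact automatic.

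What the paper singles out as the one nontrivial hypothesis required by \cite{Solodov} is that the stationary set $S$ consist only of \emph{isolated} points. This is not routine: it depends on the specific piecewise-linear geometry of $f$, the linear independence of the gradients $g_j$ on the active polytopes, and the choice of penalty parameter $a$ being large enough. The paper devotes an entire appendix section (\cref{sec:isolated}, \cref{theo:isolated}) to establishing isolation, with a case analysis over $\partial\mathcal{F}$, $\mathcal{F}$, and $\mathcal{NF}$. Your proposal does not mention this condition at all, so as written it does not complete the verification that the paper regards as essential for invoking \cite{Solodov}. If you want your proof to stand, you need either to supply an argument that stationary points are isolated, or to argue that the version of \cite{Solodov} you have in mind does not require isolation and instead yields the accumulation-point statement directly from your items (i)--(iv).
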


The proof of this theorem can be found in \cite{Solodov} and only requires that the set $S$ only contains isolated points from $I^n$, which we prove in \cref{sec:isolated}.

Now, considering an iterate $x_k \in \F$, it is also clear that reducing $F(x_k)$ means decreasing the distance of $x_k$ to $x_0$, provided $\alpha_{k+1}$ is sufficiently small. Therefore, $x_k \in \F$ cannot be a stationary point. Assume that the CW attack converges to $x^*$ and is successful, i.e., $x^* \notin \NF$. This implies $x^* \in \partial \F$. Summarizing this, we obtain:

\begin{lemma}
Any successful CW attack converges to a stationary point on the boundary $\partial \F$ of the feasible set.
\end{lemma}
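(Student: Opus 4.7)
The plan is to combine \cref{theorem:stationarypoint} (which places the limit in the stationary set of $F$) with a direct ruling-out of the interior of $\F$ from that stationary set, leaving the boundary as the only option. Concretely, let $x^*$ denote the limit of the CW iterates, assumed to lie in $(0,1)^n$ as required by the theorem, so that $0 \in \nabla_C F(x^*) + N_{I^n}(x^*)$, and note that $N_{I^n}(x^*) = \{0\}$. A successful attack means $x^* \notin \NF$, hence $x^* \in \F$; my goal is to upgrade this to $x^* \in \partial \F$.

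The key observation is that $f$ vanishes on an open neighborhood of every interior point of $\F$. Indeed, if $x^* \in \mathrm{int}(\F)$, then $\kappa(x^*) = i$ for some $i \neq t$ with $i \neq 0$, which means that $y_i > y_j$ strictly for all $j \neq i$; in particular $y_i > y_t$, hence $Z_i(x^*) > Z_t(x^*)$ by monotonicity of the softmax in each argument. By continuity of $Z$, the strict inequality $Z_i(x) > Z_t(x)$ persists in some open neighborhood $U$ of $x^*$, so $\max_{j \neq t} Z_j(x) > Z_t(x)$ and therefore $f(x) = 0$ on $U$ by the definition in \cref{eq:penality1}. Consequently $\nabla_C f(x^*) = \{0\}$ and $\nabla_C F(x^*) = \{2(x^* - x_0)\}$.

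Plugging this into the stationarity condition yields $2(x^* - x_0) = 0$, i.e.\ $x^* = x_0$. But $\kappa(x_0) = t$ places $x_0 \in \NF$, which contradicts $x^* \in \F$ since $\F \cap \NF = \emptyset$. Hence the case $x^* \in \mathrm{int}(\F)$ is impossible, and together with $x^* \in \F$ (closedness of $\F$) this forces $x^* \in \partial \F$, as claimed.

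The main obstacle is the second paragraph: one must argue that $f$ vanishes on a whole neighborhood rather than just at $x^*$, and that this is not spoiled by the convention $\kappa = 0$ on sub-boundaries separating two non-$t$ classes inside $\F$. The latter is harmless because $f$ only compares $Z_t$ against the maximum non-$t$ logit, and that maximum strictly dominates $Z_t$ on all of $\mathrm{int}(\F)$ by continuity. The remaining edge case of $x^* \in \partial I^n$, where $N_{I^n}(x^*)$ could accommodate a nonzero $2(x^* - x_0)$, is excluded by the hypothesis of \cref{theorem:stationarypoint} that accumulation points lie in $(0,1)^n$.
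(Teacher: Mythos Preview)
Your argument is correct and follows the same line as the paper's: both exploit that on the interior of $\F$ the penalty $f$ vanishes, so $F$ reduces to the pure distance term whose only critical point would be $x_0\in\NF$, contradicting a successful attack. The paper phrases this via the descent dynamics (an iterate in $\F$ strictly decreases its distance to $x_0$, hence cannot be stationary), whereas you phrase it via the first-order Clarke condition $\nabla_C F(x^*)=\{2(x^*-x_0)\}$; these are two sides of the same coin, and your version is in fact more carefully argued (handling the $\kappa=0$ sub-boundaries inside $\F$ and the normal cone at interior points explicitly).
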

%
Let us now consider the counter attack and therefore let $\Ftwo \subset \NFone := \NF$ denote the counter attack's feasible region and $\NFtwo \subset \Fone := \F$ the infeasible region.
We seek to minimize the functional
\begin{equation}
    F^{(2)}(x) := \dist_p(x_k,x)^p + b f^{(2)}(x)
\end{equation}
where $x^k$ final iterate of the primary attack and $b$ is suitably chosen. The penalty $f^{(2)}$ is chosen as in \cref{eq:penality1} but with $t=\kappa(x_k)$. Let $\{ Q_1,\ldots,Q_s \}$ be the set of all polytopes that fulfill $x^* \in Q_i, \; i=1,\ldots,s$. We assume that the first attack is successful and has iterated long enough such that the final iterate $x_k$ of the first attack is so close to its stationary point that $\dist(x^*,x_k) < \varepsilon$ and $B(x^*,3\varepsilon) \subset \bigcup_{i=1}^s Q_i \subseteq I^n $. 

\begin{theorem} \label{theorem:boundedness}
Consider the preceding assumptions and further assume that $f$ has no zero gradients inside all polytopes, there are no polytope boundaries at decision boundaries, the counter attack starts at $x_k \in \NFtwo$, stops when reaching $\Ftwo$, uses a sufficiently small initial step size $\alpha_0$ and a sufficiently large $b$. 
Then the counter attack iteration minimizing $F^{(2)}$ never leaves $B(x^*,3\varepsilon)$.
\end{theorem}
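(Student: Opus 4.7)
The plan is to prove boundedness by induction on the iteration count, leveraging the piecewise-linear structure of $f^{(2)}$ on $B(x^*, 3\varepsilon)$ together with a descent argument for the projected subgradient iteration. First I would exploit the key structural assumption that $B(x^*, 3\varepsilon) \subset \bigcup_{i=1}^s Q_i$ with $x^* \in Q_i$ for all $i$: on each $Q_i$ the map $Z$ is affine, so $f^{(2)}$ is piecewise linear on the whole ball. Compactness and the no-zero-gradients-inside-polytopes hypothesis then yield uniform bounds $0 < \delta \leq \|g\| \leq M$ on the elements of $\nabla_C f^{(2)}$, and since $x^* \in \partial \Ftwo$ gives $f^{(2)}(x^*) = 0$, piecewise linearity delivers $f^{(2)}(x_k) \leq M\varepsilon$ and hence the initial-energy bound $F^{(2)}(x_k) \leq bM\varepsilon$.

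Next I would establish that $F^{(2)}(x_{k,j})$ is monotonically non-increasing along the iteration. Restricted to the interior of any polytope, $F^{(2)}$ is a quadratic with Hessian $2I$ and thus has $2$-Lipschitz gradient; the classical descent lemma therefore applies for $\alpha_j \leq 1/2$. Across polytope interfaces, where the Clarke subgradient can jump, the continuity of $f^{(2)}$ together with the no-polytope-boundaries-at-decision-boundaries hypothesis allows me to invoke the projected subgradient analysis of Solodov already used for \cref{theorem:stationarypoint}, whose bookkeeping absorbs these jumps as $O(\alpha_j^2)$ errors summable via $\sum_{j} \alpha_j^2 < \infty$. For sufficiently small $\alpha_0$ the cumulative error is dominated and descent of $F^{(2)}$ is preserved.

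Given descent, $F^{(2)}(x_{k,j}) \leq F^{(2)}(x_k) \leq bM\varepsilon$ as long as the induction hypothesis $x_{k,j} \in B(x^*, 3\varepsilon)$ holds, and since $F^{(2)}(x_{k,j}) \geq \dist(x_k, x_{k,j})^2$ this yields $\dist(x_k, x_{k,j}) \leq \sqrt{bM\varepsilon}$. The triangle inequality then gives $\|x_{k,j} - x^*\| \leq \sqrt{bM\varepsilon} + \varepsilon$, which closes the induction provided the constants satisfy $\sqrt{bM\varepsilon} \leq 2\varepsilon$. The role of the sufficiently-large-$b$ hypothesis is to ensure the penalty term is strong enough to drive the iterate into $\Ftwo$ promptly, at which point the stopping criterion terminates the attack before the energy bound is saturated; the smallness of $\alpha_0$ simultaneously enforces the descent-lemma step-size requirement and controls single-step drift.

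The main obstacle I anticipate is the rigorous handling of polytope-boundary transitions, where $F^{(2)}$ fails to be smooth and the standard descent lemma does not apply verbatim. The cleanest route is probably to recast the step as a Clarke-subgradient step of the piecewise-linear $bf^{(2)}$ plus the smooth quadratic and invoke the convergence machinery from Solodov used in \cref{theorem:stationarypoint}. A secondary delicacy is the interplay between $\alpha_0$, $b$, and $\varepsilon$ in the inequality $\sqrt{bM\varepsilon} \leq 2\varepsilon$: $b$ cannot be arbitrarily large relative to $\varepsilon/M$, so the proof most likely operates in a regime where $b$ is large compared with $M/\delta$ (ensuring rapid penalty-driven descent) yet still moderate relative to $\varepsilon/M$, with $\alpha_0$ tuned to match.
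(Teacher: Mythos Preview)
Your approach differs substantially from the paper's, and the central step does not close. The paper does \emph{not} argue via descent of the objective $F^{(2)}$. Instead it sets up a geometric barrier: for any $x$ in the annulus $B(x^*,3\varepsilon)\setminus B(x^*,2\varepsilon)$ it shows directly that a single subgradient step \emph{decreases the distance to $x^*$}. The key observation is that $g^{(2)}(x)^T(x-x^*)>0$ because $f^{(2)}$ is affine on each $Q_i$, $f^{(2)}(x^*)=0$, and $f^{(2)}(x)>0$ for $x\in\NFtwo{}^\circ$; combined with a lower bound on $\cos\theta$ between $x-x_k$ and $x-x^*$ (via the law of cosines) this gives $G^{(2)}(x)^T(x-x^*)\geq \tfrac{2}{3}\varepsilon^2$. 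An upper bound on $\|G^{(2)}(x)\|$ then makes $\|x-x^*\|^2-\|x-\alpha_k G^{(2)}(x)-x^*\|^2>0$ for $\alpha_k$ small, so the iterate cannot cross the shell outward.

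Your energy-sublevel argument has a genuine incompatibility that you yourself flag but cannot resolve. The bound $\sqrt{bM\varepsilon}\leq 2\varepsilon$ is equivalent to $b\leq 4\varepsilon/M$, a \emph{smallness} condition on $b$. But the role of ``sufficiently large $b$'' in the statement (and the paper's explicit choice $b=8\varepsilon/c$ with $c=\min\|g^{(2)}\|$) forces $bM\geq 8\varepsilon\,M/c\geq 8\varepsilon$, since $M\geq c$. Hence $bM\varepsilon\geq 8\varepsilon^2>4\varepsilon^2$ and your induction never closes. The underlying reason is that the initial energy $F^{(2)}(x_k)=bf^{(2)}(x_k)$ is dominated by the penalty term, and nothing prevents the iteration from trading penalty for distance; an energy sublevel set of $F^{(2)}$ is simply too large to fit inside $B(x^*,3\varepsilon)$. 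Separately, the monotone-descent claim across polytope interfaces is not supported by the Solodov machinery you cite: that analysis yields convergence of accumulation points, not monotonicity of the objective, and subgradient methods for nonsmooth functions are generically non-monotone. The paper sidesteps both difficulties by never invoking descent of $F^{(2)}$ and instead controlling $\|x-x^*\|$ directly in the annulus.
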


Note that the first two assumptions should almost always hold true  for any DNN that obtains its weights by ordinary gradient descent.

%
\begin{figure}[tb]
    \centering
    \scalebox{0.42}{
\begin{tikzpicture}[thick,font=\sffamily\huge]
\draw[fill=black!10!white,color=black!20!orange!15!white] (0.5,-3) --  (-6,-6) -- (-6,-7.5) -- (6,-7.5) -- (6,-3.5) -- cycle;
\node[circle,draw,black,dashed,minimum width=2.75cm] at (0.5,-3) {};
\node[circle,draw,black,dashed,minimum width=5.5cm] at (0.5,-3) {};
\node[circle,draw,black,dashed,minimum width=8.25cm] at (0.5,-3) {};

\draw[very thick] (0.5,-3) --  (-6,-6);
\draw[very thick] (0.5,-3) --  (6,-3.5);

\node [fill, draw, circle, minimum width=3pt, inner sep=0pt, pin={[outer sep=1.5pt, pin distance=130pt]180:$x_k$}] (xk) at (0.5,-4.2) {};
\node [fill, draw, circle, minimum width=3pt, inner sep=0pt, pin={[outer sep=1.5pt, pin distance=130pt]180:$x^*$}] (xs) at (0.5,-3) {};
\node [fill, draw, circle, minimum width=3pt, inner sep=0pt, pin={[outer sep=1.5pt, pin distance=40pt]0:$x$}] (x) at (3.7,-3.8) {};

\draw pic[<->, fill=\mycolor!50!white,
      angle radius=19mm, angle eccentricity=1.2, 
      pic text = $\theta$, pic text options={right=5mm}] {angle = xs--x--xk};
      
\draw[thick,-latex,color=red] (xs) --  (x) node[near start,above] {$w$};
\draw[thick,-latex,color=red] (xk) --  (x) node[near start,above] {$v$};

\node at (4.5,-6.5) {$\NF^{(2)}$};
\node at (4.5,0.5) {$\F^{(2)}$};
\node at (0.5,-2.3) {$B(x^*,\varepsilon)$};
\node at (0.5,-0.875) {$B(x^*,2\varepsilon)$};
\node at (0.5,0.45) {$B(x^*,3\varepsilon)$};
\end{tikzpicture}
}
\caption{Situation present in the counter attack. By bounding $\cos(\theta)$ from below we show that $x$ reduces its distance to $x^*$ when performing a gradient descent step under the given assumptions. Therefore, $x$ never leaves $B(x^*,3\varepsilon).$}
\label{fig:counterattack}
\end{figure}
\begin{proof} Let $x \in B(x^*,3\varepsilon) \setminus B(x^*,2\varepsilon)$ and $G^{(2)}(x) \in \nabla_C F^{(2)}(x)$ any gradient with corresponding $g^{(2)}(x) \in \nabla_C f^{(2)}(x)$ which is constant on a given polytope $Q_i$ (containing $x$). We can assume that $x \in \NFtwo{}^\circ$ since we are done otherwise. Furthermore, let $v :=x_k - x$, $w := x^* - x$ and $\cos(\theta) = \frac{v^T w}{\norm{v} \norm{w}}$ where $\norm{\cdot} = \norm{\cdot}_2$ (cf.\ \cref{fig:counterattack}). This implies $x^* - x_k = w - v$ and
\begin{align}
    \cos(\theta) & = \frac{\norm{v}^2 + \norm{w}^2 - \norm{w-v}^2}{2 \norm{v} \norm{w}} \nonumber  \\ 
    & \geq \frac{4\varepsilon^2 + \varepsilon^2 - \varepsilon^2}{2 \cdot 3\varepsilon \cdot 4 \varepsilon} = \frac{1}{6} \, .
\end{align}
Since $G^{(2)}(x) = 2(x-x_k) + b \, g^{(2)}(x)$, we obtain
\begin{align}
         & G^{(2)}(x)^T (x-x^*) \nonumber \\
    = \; & 2(x-x_k)^T (x-x^*) + b \underbrace{ g^{(2)}(x)^T (x-x^*) }_{>0} \nonumber \\[-1.5ex]
    > \; & 2(x-x_k)^T (x-x^*) 
    \geq \; 4 \varepsilon^2 \cos(\theta) = \frac{2\varepsilon^2}{3} \, .
\end{align}
Note that, $g^{(2)}(x)^T (x-x^*) >0$ follows from $x-x^*$ being an ascent direction. This holds due to $x \in Q_i \cap \NFtwo{}^\circ$ which implies $f^{(2)}(x)>0$ and the fact that $x^* \in Q_i$ as well as $f^{(2)}(x^*)=0$.
Let $ \mathcal{B} := \bigcup \{ Q_i : Q_i \cap \partial\Ftwo{} \neq \emptyset \} $ and
\begin{equation}
    c := \min_{z \in \mathcal{B} } \| g^{(2)}(z) \|_2>0, \quad  C:= \max_{z \in \mathcal{B} } \| g^{(2)}(z) \|_2 \, .
\end{equation}
If $b$ is sufficiently large, any stationary point $x$ fulfills $2(x-x_k) = b \sum \lambda_i g_i^{(2)}(x)$ with $\lambda_i \geq 0$ and $\sum \lambda_i \leq 1$ where the sums are w.r.t.\ all polytopes $Q_i$ that contain $x$. 
This requires $ \|x-x_k \| < 8\varepsilon \leq bc $, i.e., $b = \frac{8\varepsilon}{c}$ is sufficiently large.
Therefore $\norm{G^{(2)}(x)}^2$ can be bounded from above as follows:
\begin{align}
    & 4 \norm{x-x_k}^2 + 4b (x-x_k)^T g^{(2)}(x) + b^2 \norm{ g^{(2)}(x) }^2  \nonumber \\ 
    \leq & 4(4\varepsilon)^2 + 4 \frac{8\varepsilon}{c} 4\varepsilon C + \frac{64 \varepsilon^2}{c^2} C^2 = 64\varepsilon^2 \! \left(1+\frac{C}{c}\right)^2 \!\!\!  .
\end{align}
Hence, the difference in distance to $x^*$ when performing a gradient descent step is
\begin{align} \label{eq:defcC}
    & \norm{x - x^*}^2 - \norm{x - x^* - \alpha_k G^{(2)}(x) }^2 \nonumber \\
    = \; & 2\alpha_k G^{(2)}(x)^T ( x - x^* ) - \alpha_k^2 \norm{G^{(2)}(x)}^2 \nonumber \\
    \geq \; & 2\alpha_k \cdot \frac{2\varepsilon^2}{3} - \alpha_k^2 64\varepsilon^2\left(1+\frac{C}{c}\right)^2 \, .
\end{align}
The latter expression is greater than zero if the step-size schedule $\alpha$ is small enough, i.e., $\alpha_k < \frac{1}{16(1*\frac{C}{c})^2}$. Hence, the distance to $x^*$ decreases. 
Note that $\alpha_k$ is hence small enough such that $x + \alpha_k G^{(2)}(x) \in (0,1)^n$ and we can omit the projection onto $I^n$. A second requirement is $\alpha \norm{G^{(2)}(x)} < \varepsilon$ such that $B(x^*,3\varepsilon) \setminus B(x^*,2\varepsilon)$ cannot be skipped by any $x \in B(x^*,2\varepsilon)$ within a single gradient descent step, however a simple derivation shows that this condition leads to a weaker bound on $\alpha$. This concludes the proof.
\end{proof}



\newcommand{\DRV}[1]{D(#1)}
\newcommand{\Dn}[0]{D^{(0,k)}}
\newcommand{\Dnm}[1]{\bar{D}^{(k,#1)}}

Note that the preceeding theorem and proof do not depend on whether we choose the first attack to be targeted or untargeted.
We now take into account the stochastic effects that stem from choosing an arbitrary initial image $x \in I^n$ represented by a random variable $X$. Let $k$ be the number of iterates of the original CW attack and let $X_{k} \in \F^{(1)}$ be the final iterate after $k \geq 1$ iterations, starting at $X_{0} = X$. For any random variables $Y,Z$ with values in $I^n$ let
\begin{align}
    \DRV{Y} & = \dist(Y,\F) = \inf_{y \in \F} \dist(Y,y) \quad \text{and} \nonumber\\
    \Dn & = \dist(X_{0},X_{k}) \geq \DRV{X_{0}} \, ,
\end{align}
provided $X_k\in\F^{(1)}$, which means that the $k$-th iterate is a successful attack.
For $\tau \in \mathbb{R}$, we define the cumulative density function corresponding to a random variable $D$ (representing a random distance) by
    $F_D(\tau) = P( D \leq \tau ) = D_*P( (-\infty,\tau])$
where the push forward measure is $D_* P( B ) = P(D^{-1}(B))$ for all $B$ in
the Borel $\sigma$-algebra. The area under receiver operator characteristic curve of $\DRV{Y}$ and $\DRV{Z}$ is defined by
\begin{align}
    \mathrm{AUROC}\!\left(\DRV{Y},\DRV{Z}\right)
    & = \int_{\mathbb{R}_+} \! F_{\DRV{Y}} \!( \tau ) \, d\DRV{Z}_*P ( \tau ). 
\end{align}
For a derivation of the latter definition, see \cref{sec:continuousAUROC}.
Obviously, it holds that $\DRV{Y},\Dn \geq 0$. The following lemma formalizes under realistic assumptions that we obtain perfect separability of $\DRV{X}$ and $\DRV{X_{k}}$ as we keep iterating the initial CW attack, i.e., $k \to \infty$.

\begin{lemma}
\label{lem:AUROC}
Let $\DRV{X} \geq 0$ with $P(\DRV{X} = 0) = 0$ and $\DRV{X_{k}} \to 0$ for $k \to \infty$ weakly by law. Then,
\begin{equation}
    \mathrm{AUROC}\!\left(\DRV{X_{k}},\DRV{X}\right) \stackrel{k \to \infty}{\longrightarrow} 1 \, .
\end{equation}
\end{lemma}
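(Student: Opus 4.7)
The plan is to recognize that the AUROC integral is a bounded integral against the fixed measure $D(X)_*P$ on $\mathbb{R}_+$, so that once we establish pointwise convergence of the integrand almost everywhere with respect to this measure, the result follows from dominated convergence. The assumption $D(X_k)\to 0$ weakly by law is exactly what will give us the pointwise convergence of the CDFs appearing in the integrand, and the assumption $P(D(X)=0)=0$ is exactly what will let us ignore the lone discontinuity point of the limit CDF.

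More concretely, I would proceed as follows. First, identify the limit CDF: since $D(X_k)$ converges in distribution to the constant $0$, its limit CDF is $F_0(\tau)=\mathbf{1}_{[0,\infty)}(\tau)$, whose unique discontinuity is at $\tau=0$. By the Portmanteau characterization of weak convergence, $F_{D(X_k)}(\tau)\to F_0(\tau)$ at every continuity point of $F_0$, so in particular
\begin{equation}
F_{D(X_k)}(\tau)\ \xrightarrow[k\to\infty]{}\ 1\qquad \text{for every }\tau>0.
\end{equation}
Second, note that the integrand $\tau\mapsto F_{D(X_k)}(\tau)$ is bounded by $1$, and that by the hypothesis $P(D(X)=0)=0$ the measure $D(X)_*P$ assigns no mass to $\{0\}$. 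Consequently, the pointwise convergence above holds $D(X)_*P$-almost everywhere on $\mathbb{R}_+$.

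Third, apply the dominated convergence theorem (with dominating function $1$, which is integrable against the probability measure $D(X)_*P$) to the integral defining the AUROC from \cref{sec:continuousAUROC}:
\begin{align}
\mathrm{AUROC}\!\left(D(X_k),D(X)\right)
&= \int_{\mathbb{R}_+} F_{D(X_k)}(\tau)\, dD(X)_*P(\tau) \nonumber\\
&\xrightarrow[k\to\infty]{} \int_{\mathbb{R}_+} 1\, dD(X)_*P(\tau).
\end{align}
Finally, evaluate the limit: since $D(X)\geq 0$ and $P(D(X)=0)=0$, the measure $D(X)_*P$ is a probability measure concentrated on $(0,\infty)\subset\mathbb{R}_+$, so the right-hand side equals $P(D(X)>0)=1$.

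I do not anticipate a serious obstacle here; the only delicate point is bookkeeping around $\tau=0$, where $F_0$ is discontinuous and where weak convergence of $F_{D(X_k)}$ need not hold. The hypothesis $P(D(X)=0)=0$ is tailored precisely to kill that single atom of concern, allowing dominated convergence to go through cleanly. If one wished to weaken the hypothesis, one would have to separately control $F_{D(X_k)}(0)=P(D(X_k)=0)$, but under the stated assumptions no such extra argument is needed.
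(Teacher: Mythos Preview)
Your proposal is correct and follows essentially the same route as the paper: identify the limit CDF of the Dirac measure at $0$, use the CDF characterization of weak convergence (the paper cites the Helly--Bray lemma, you cite Portmanteau) to get pointwise convergence away from $\tau=0$, invoke $P(\DRV{X}=0)=0$ to upgrade this to $\DRV{X}_*P$-a.e.\ convergence, and finish with dominated convergence against the constant $1$. The only cosmetic difference is naming: the paper writes $F_{\delta_0}$ for your $F_0$ and Helly--Bray for your Portmanteau.
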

\begin{proof}
Let $\delta_0$ be the Dirac measure in $0$ with distribution function $F_{\delta_0}(z) = \{ 1 \text{ for } z \geq 0, \; 0 \text{ else} \}$.
By the characterization of weak convergence in law by the Helly-Bray lemma 
\cite{Ferguson},
$F_{\DRV{X_{k}}}(\tau) \to F_{\delta_0}(\tau)$ for all $\tau$ where $F_{\delta_0}(\tau)$ is continuous. This is the case for all $\tau \neq 0$. As $P(\DRV{X} = 0) = 0$, this implies $F_{\DRV{X_{k}}} \to F_{\delta_0}$ $\DRV{X}_* P$-almost surely. Furthermore, it holds that $|F_{\DRV{X_{k}}}(\tau)| \leq 1$. Hence, by Lebesgue's theorem of dominated convergence
\begin{align}
    \mathrm{AUROC}\!\left(\DRV{X_{k}},\DRV{X}\right) = & \; \int_{\mathbb{R}} F_{\DRV{X_{k}}}(\tau) \, d \DRV{X}_* P( \tau ) \nonumber \\
    \stackrel{k \to \infty}{\longrightarrow} & \; \int_{\mathbb{R}} F_{\delta_0}(\tau) \, d \DRV{X}_* P( \tau ) \nonumber \\
    = & \; \int_{\mathbb{R}_+} \hspace{-1ex} d \DRV{X}_* P( \tau ) = 1 
\end{align}
since $\DRV{X} \geq 0$ by assumption.
\end{proof}

Now, let $X$ be a random input image with a continuous density. Assuming that decision boundaries of the neural net have Lebesgue measure zero, $X \in \NF^{(1)}$ holds almost surely,
and therefore $\DRV{X} > 0$ almost surely, such that indeed $P(\DRV{X} = 0) = 0$. Furthermore, let $X_{k}$ be the $k$-th iterate inside $\F^{(1)}$ of the CW attack starting with $X$. 
Conditioned on the event that the attack is successful, we obtain that $X_{k} \stackrel{k \to \infty}{\longrightarrow} X^* \in \partial \F$, thus
\begin{equation}
    \dist(X_{k},X^*) \stackrel{k \to \infty}{\longrightarrow} 0
\end{equation}
almost surely for the conditional probability measure, which we henceforth use as the underlying measure. Now there exists a learning rate schedule $\alpha_{k,j}$ and a penalty parameter $a$ such that for all steps $X_{k,j}$ of the counter attack originating at $X_{k}$, the distance
\begin{equation}
    \Dnm{j} = \dist(X_{k},X_{k,j}) \leq 8 \, \dist(X_{k},X^*) \stackrel{k \to \infty}{\longrightarrow} 0
\end{equation}
almost surely. Note that the latter inequality holds since we can choose $\varepsilon$ in \cref{theorem:boundedness}, such that $\dist(x_k,x^*) > \varepsilon/2$ and then obtain the constant by the triangle inequality. Let $X_{k,j^*}$ be the first iterate of the CW counter attack in $\F^{(2)}$. We consider
\begin{equation}
 \Dnm{j^*} \!\! = \dist(X_{k},X_{k,j^*}) \leq 8 \, \dist(X_{k},X^*) \! \stackrel{k \to \infty}{\longrightarrow} \! 0   
\end{equation}
almost surely. 
Hence, we obtain by application of \cref{lem:AUROC}:

\begin{theorem}
Under the assumptions outlined above, we obtain the perfect separation of the distribution of the distance metric $\Dnm{j^*}$ of the CW counter attack from the distribution on the distance metric $\Dn$ of the original CW attack, i.e.,
\begin{equation}
    \mathrm{AUROC}\left( \Dn, \Dnm{j^*} \right) \stackrel{k \to \infty}{\longrightarrow} 1 \, .
\end{equation}
\end{theorem}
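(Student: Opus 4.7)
The plan is to derive the statement as a direct corollary of Lemma~\ref{lem:AUROC}. Identifying the lemma's $\DRV{X_k}$ (the sequence that tends to zero) with the counter-attack distance $\Dnm{j^*}$ and the lemma's $\DRV{X}$ (bounded away from zero) with the primary-attack distance $\Dn$, the proof reduces to verifying the two hypotheses of the lemma and invoking it. Almost all of the preparatory work has already been carried out in the paragraphs preceding the theorem statement, so the remaining task is largely a checklist.

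First I would check that $\Dn \geq 0$ with $P(\Dn = 0) = 0$. Non-negativity is immediate. For the zero-mass condition I would combine the assumption that $X$ has a continuous density on $I^n$ with the fact that, because the ReLU-DNN $Z$ is piecewise affine on the finitely many polytopes of $\pol$, the class boundary $\partial \F$ is a finite union of affine sub-manifolds of positive codimension and therefore has Lebesgue measure zero. Hence $X_0 \in \NF^{(1)}$ almost surely, and since a successful primary attack must cross into $\F^{(1)}$ we have $X_0 \neq X_k$, i.e., $\Dn = \dist(X_0, X_k) > 0$ almost surely under the conditional measure on the event ``primary attack succeeds''.

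Second I would check that $\Dnm{j^*} \to 0$ weakly in law. Combining Theorem~\ref{theorem:stationarypoint} with the observation that any successful CW attack must stationarize on $\partial \F$, the primary iterates converge almost surely to a point $X^* \in \partial \F$. For sufficiently large $k$, Theorem~\ref{theorem:boundedness} applies (with $\varepsilon$ chosen as a function of $\dist(X_k, X^*)$ and with $b$ and the schedule $\alpha_{k,j}$ chosen accordingly), confining the counter attack to $B(X^*, 3\varepsilon)$ and yielding the pointwise bound $\Dnm{j^*} \leq 8 \, \dist(X_k, X^*)$, which tends to $0$ almost surely. Since almost-sure convergence implies weak convergence in law, this hypothesis is met and Lemma~\ref{lem:AUROC} closes the argument.

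The main obstacle I anticipate is upgrading Theorem~\ref{theorem:boundedness} from ``holding for each fixed large $k$'' to ``holding simultaneously along the random sequence $X_k \to X^*$'' in a measurable way: one needs to pick $b$ and $\alpha_{k,j}$ from information available at iteration $k$ (not from the unknown $X^*$), and to ensure that the constants $c, C$ appearing in the proof of Theorem~\ref{theorem:boundedness} stabilize to their local values at $X^*$. Because these constants depend only on the finitely many polytopes in $\pol$ containing $X^*$ and on the gradient structure of $f^{(2)}$ on them, the stabilization is automatic once $X_k$ lies inside the interior of that finite union; from that point on, the almost-sure bound $\Dnm{j^*} \leq 8 \, \dist(X_k, X^*)$ holds uniformly and the required weak convergence is immediate.
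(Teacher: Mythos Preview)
Your proposal is correct and follows essentially the same route as the paper: the theorem is obtained directly from Lemma~\ref{lem:AUROC} after verifying (i) that the primary-attack distance has no atom at zero (via continuous density of $X$ and Lebesgue-null decision boundaries) and (ii) that $\Dnm{j^*}\le 8\,\dist(X_k,X^*)\to 0$ almost surely by Theorems~\ref{theorem:stationarypoint} and~\ref{theorem:boundedness}. Your closing paragraph on choosing $b$ and $\alpha_{k,j}$ uniformly along the random sequence is in fact more careful than the paper, which simply asserts that such a schedule exists.
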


\begin{figure*}[t]
    \centering
    \subfigure{\includegraphics[trim=25 10 45 10,clip,width=0.23\textwidth]{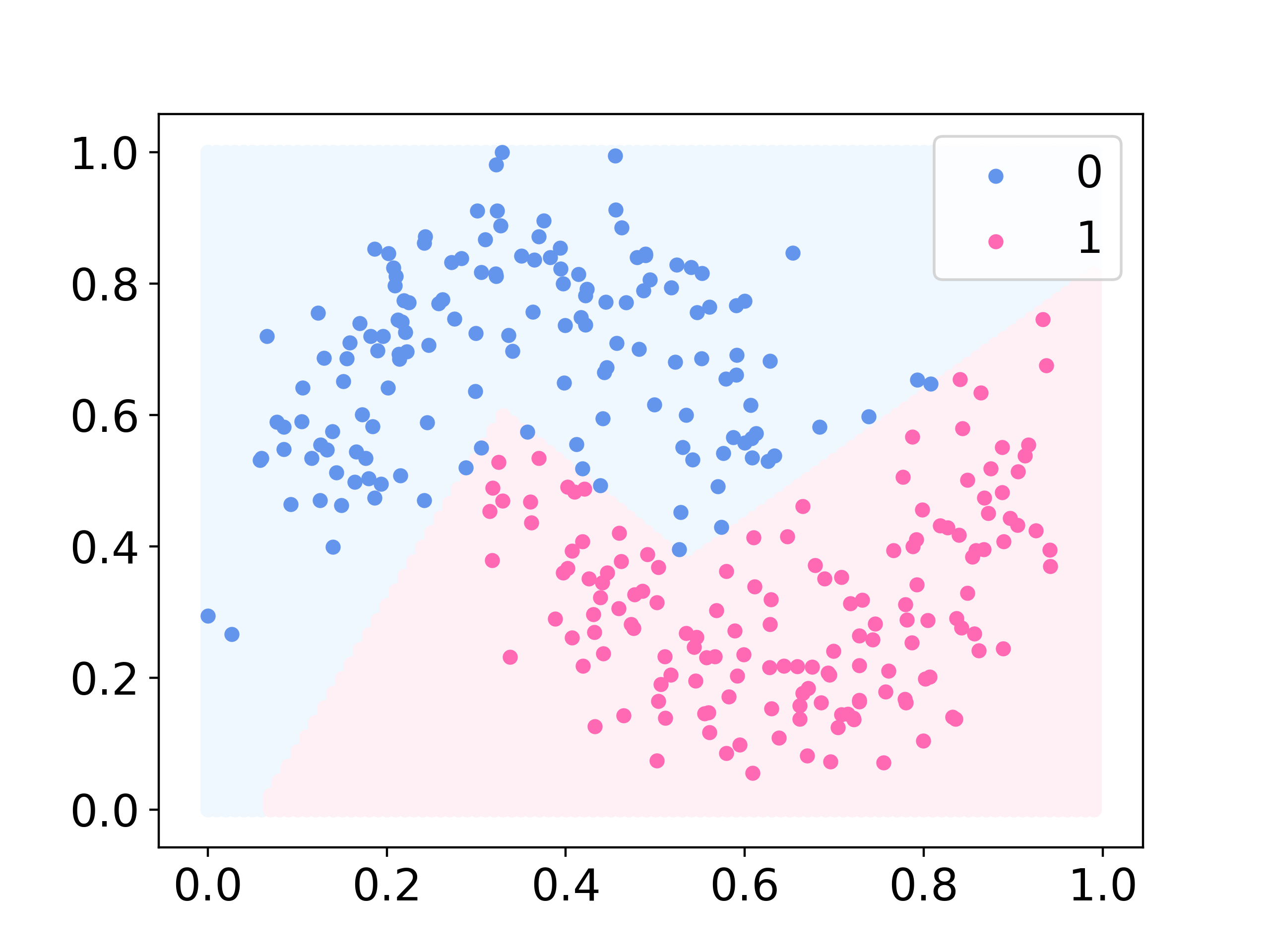}}
    \subfigure{\includegraphics[trim=28 10 65 25,clip,width=0.21\textwidth]{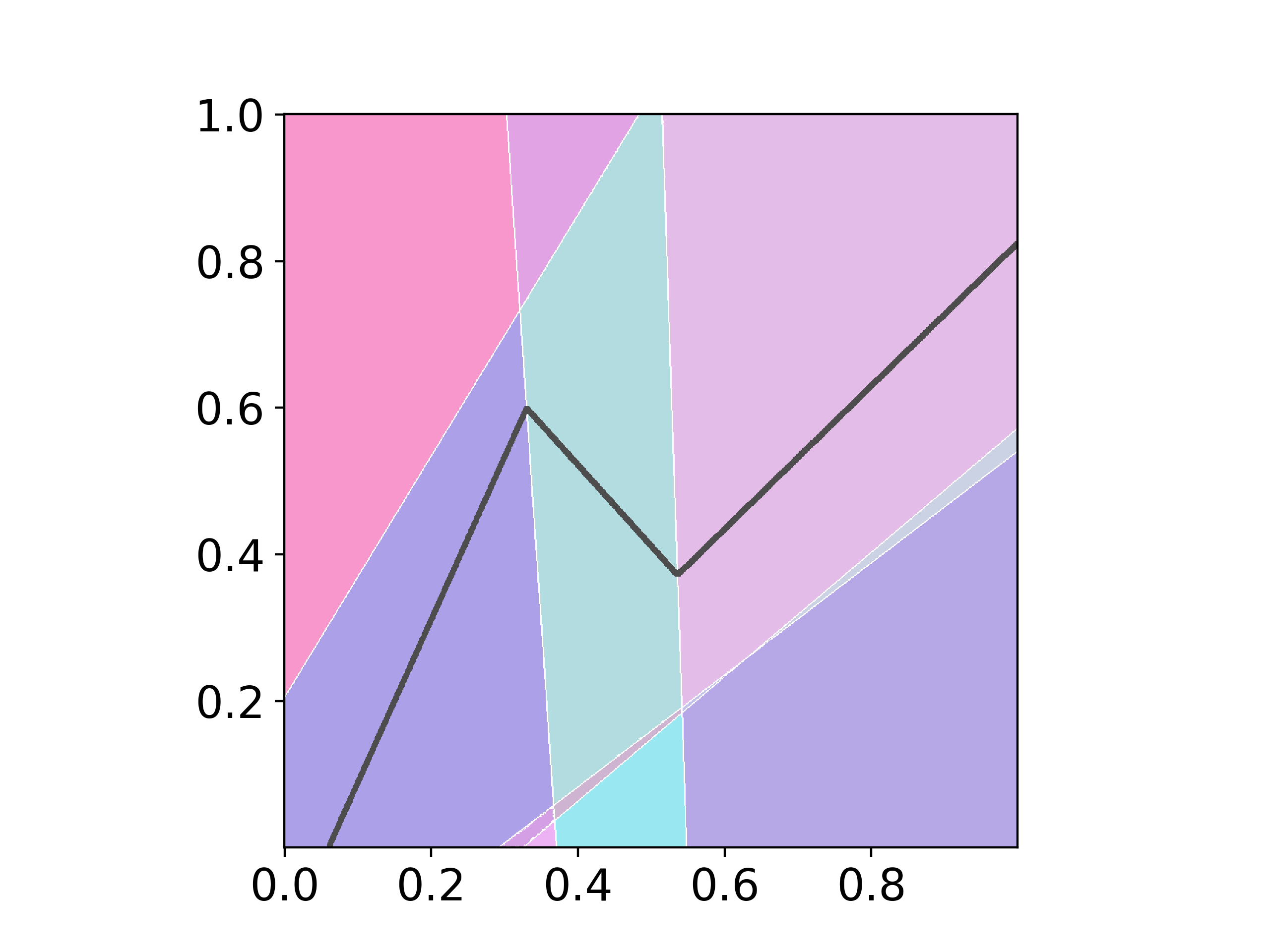}}
    \subfigure{\includegraphics[trim=80 50 50 60,clip,width=0.29\textwidth]{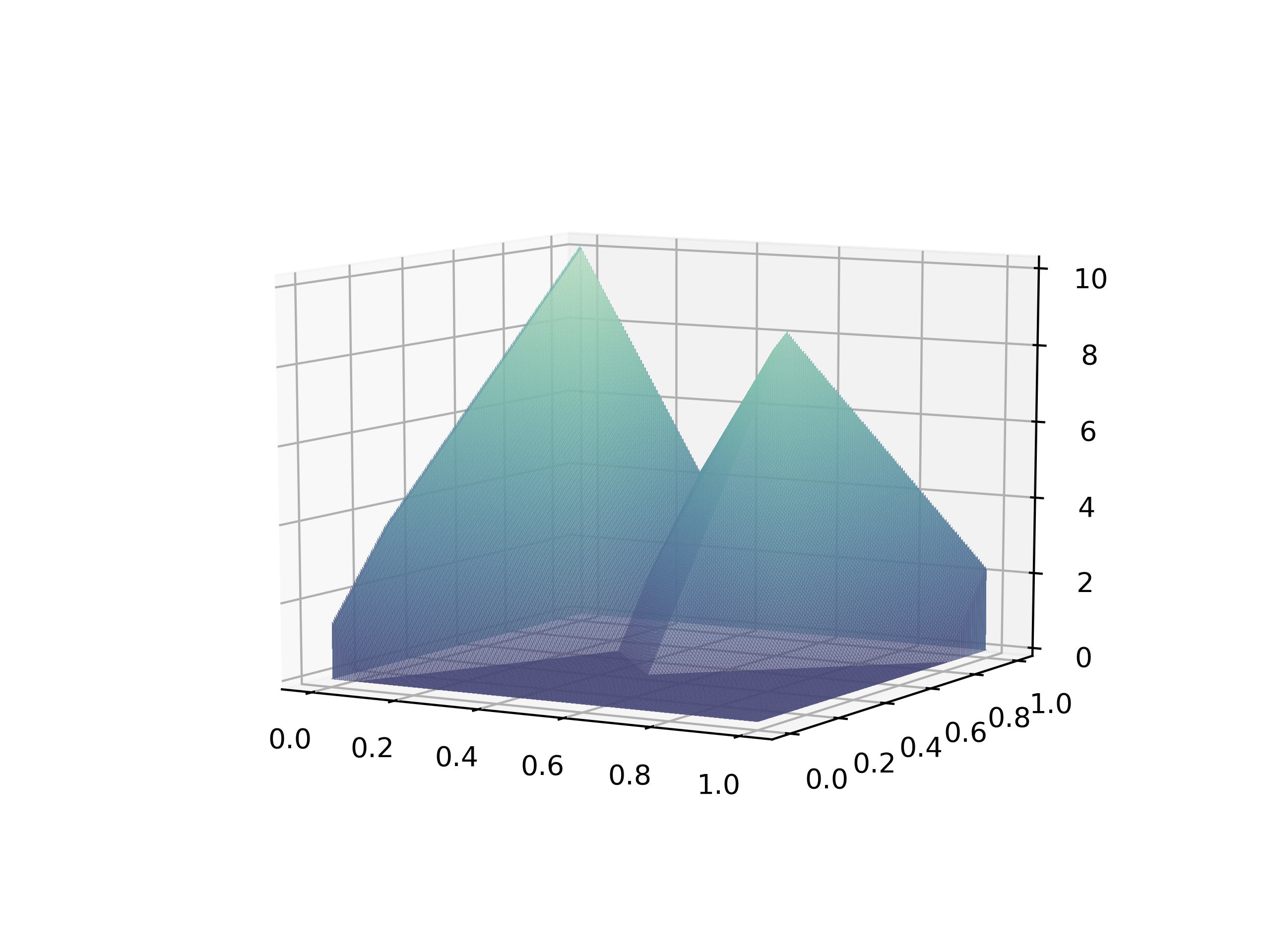}}
    \subfigure{\includegraphics[trim=5 2 5 5,clip,width=0.24\textwidth]{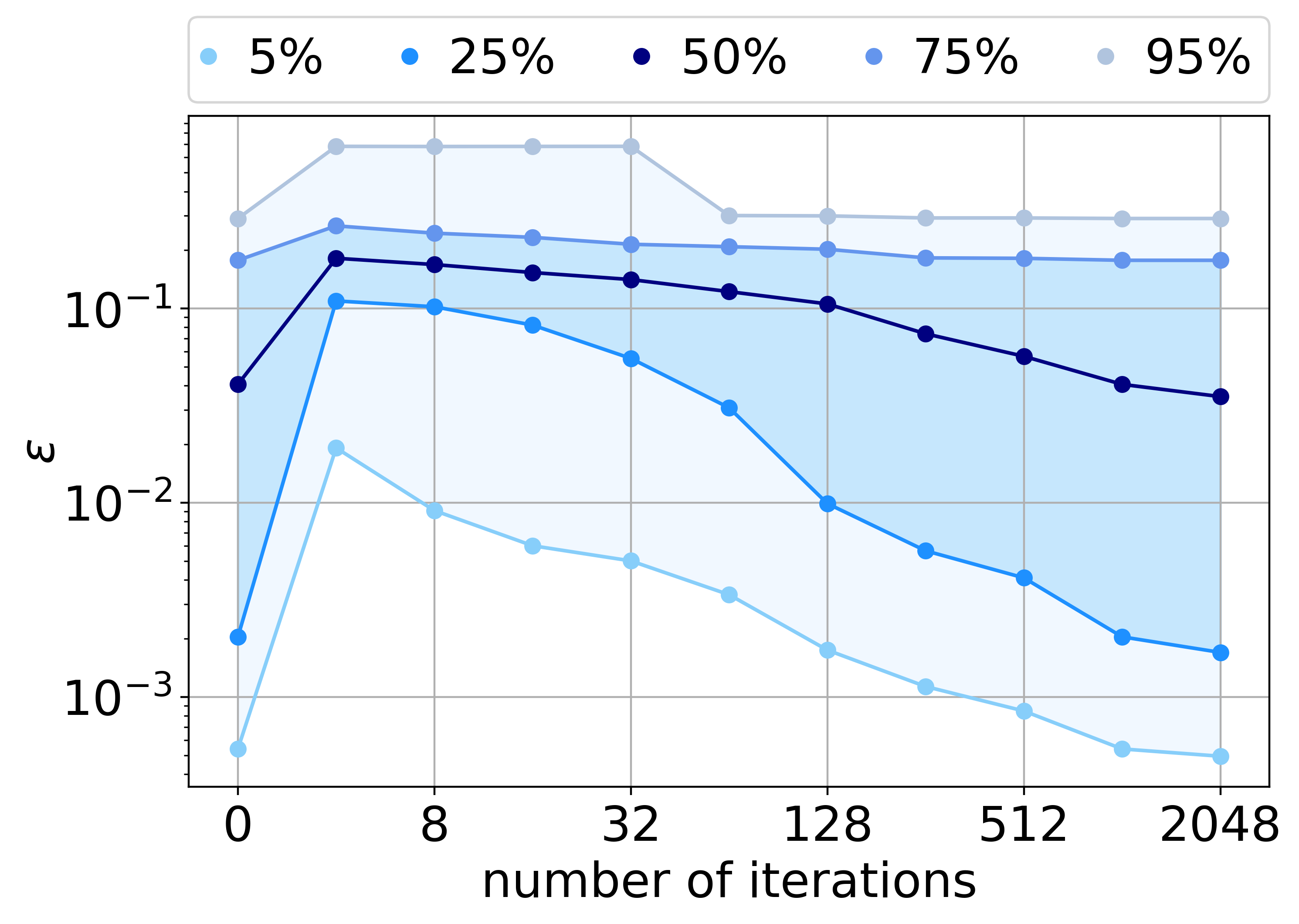}}
    \caption{(left): Classification results for the two moons example of the test dataset. (center left): A decomposition of $I^2$ into a set of polytopes for the used ReLU network with decision boundary (black line). (center right): Surface plot of loss function $f(x)$ (\cref{eq:penality1}) for class $0$. (right): Quantiles for the $\varepsilon$ values (log scaled) as function of the number of iterations.}
    \label{fig:moon_data}
\end{figure*}

\section{Numerical Experiments} \label{sec:numex}

\begin{figure}[t]
    \centering
    \subfigure{\includegraphics[trim=1 1 1 1,clip,width=0.24\textwidth]{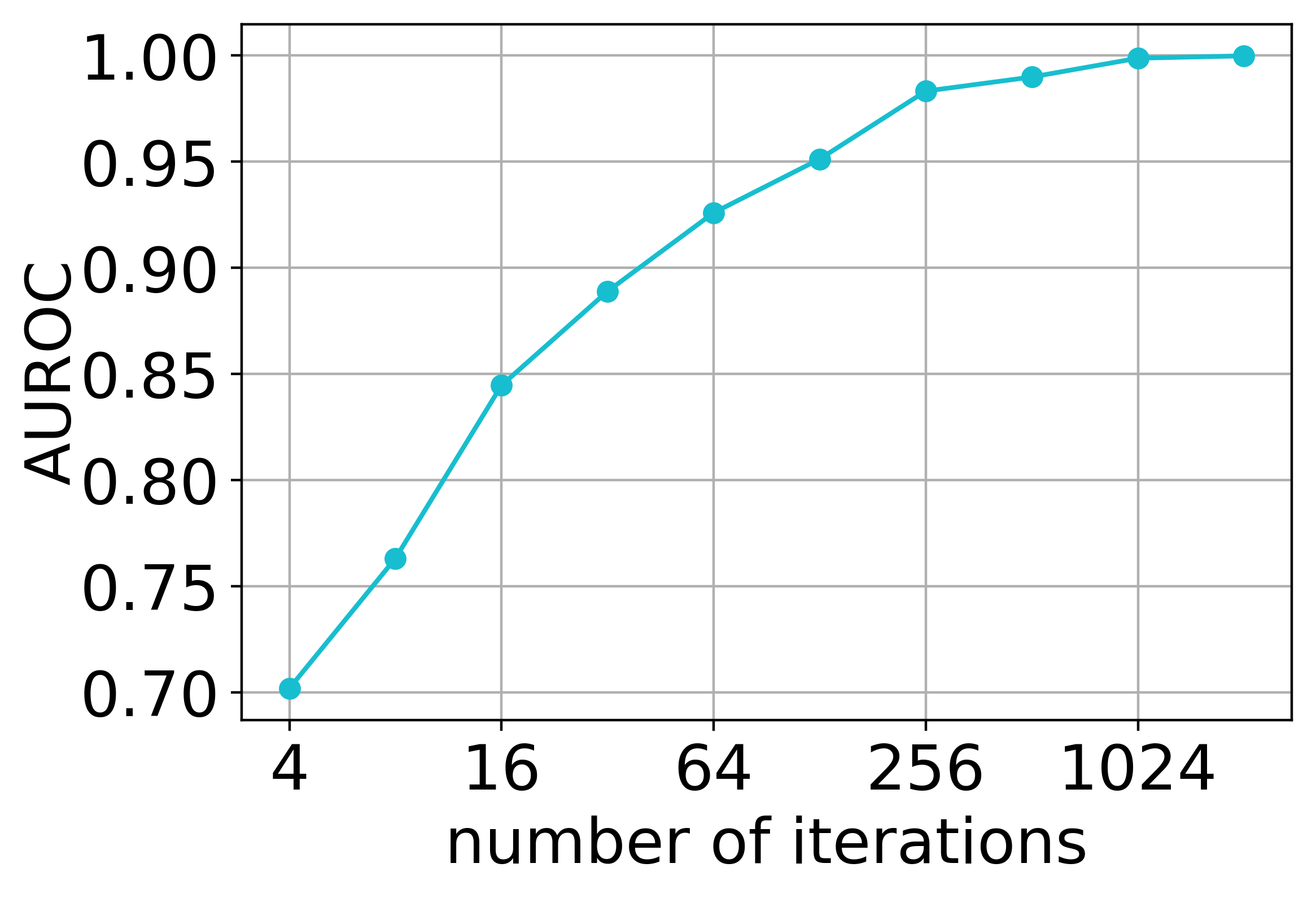}}
    \subfigure{\includegraphics[trim=1 1 1 1,clip,width=0.24\textwidth]{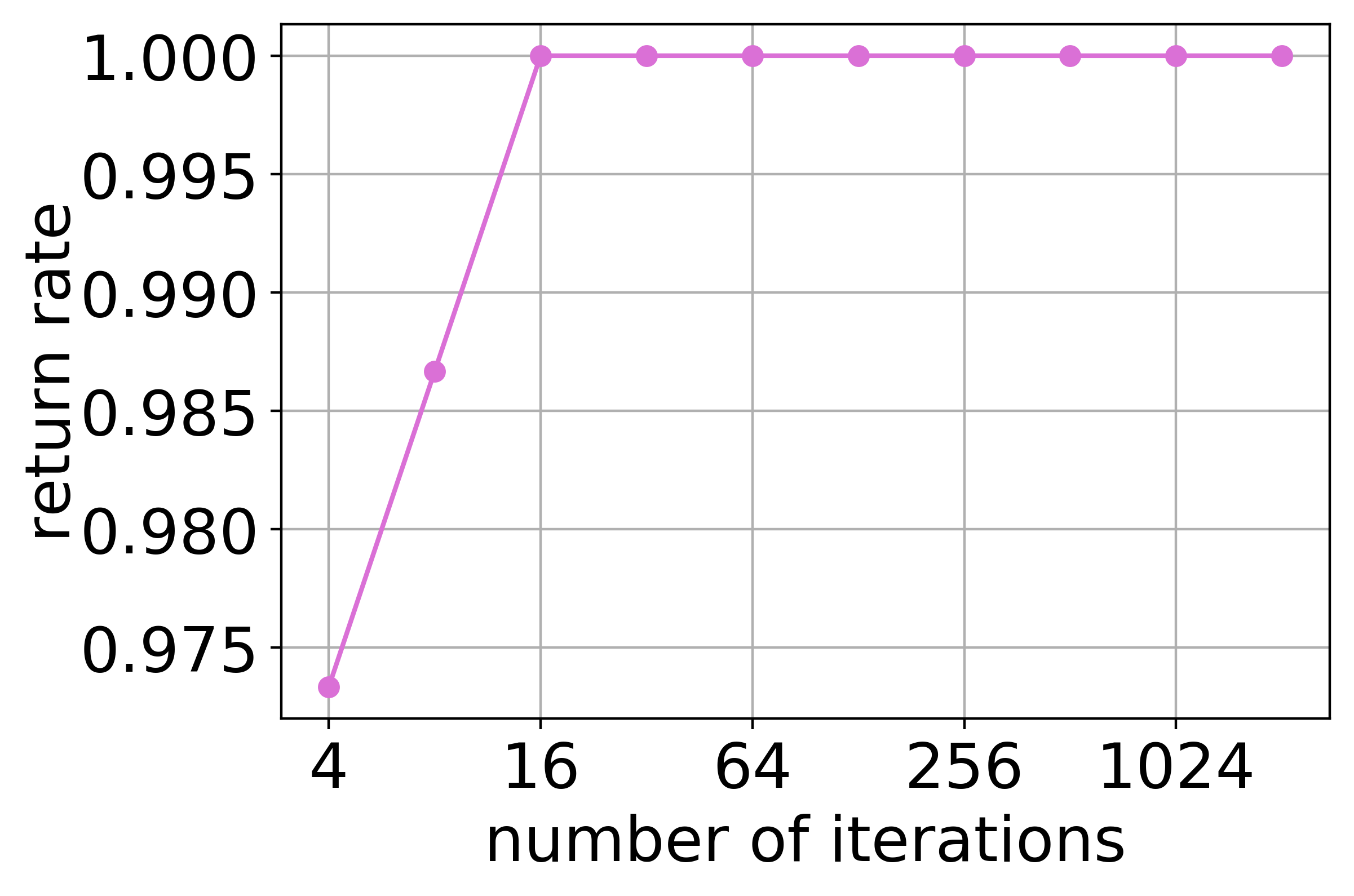}}
    \caption{(left): AUROC values as function of number of iterations for classifying between $D^{(0,j)}$ and $\bar{D}^{(k,j)}$. (right): Return rates vs.\ number of iterations for the two moons example.}
    \label{fig:moon_results}
\end{figure}

We now demonstrate how our theoretical considerations apply to numerical experiments. First, we introduce a two-dimensional classification problem which we construct in such a way that the theoretical assumptions are strictly respected and the mechanisms underlying our proof of perfect separability in the large iteration limit of the primary attack are illustrated. To this end, we create a dataset based on the two moons example. We use $2,\!000$ data points to train a classifier and $300$ data points as test set achieving a test accuracy of $93.33\%$. The test dataset and corresponding class predictions are shown in \cref{fig:moon_data} (left).
As classifier, a shallow ReLU network with one hidden layer consisting of $8$ neurons is considered. The decomposition of this network into a finite set of polytopes $\pol$ is given in \cref{fig:moon_data} (center left). A corresponding surface plot of $f(x)$ is shown in \cref{fig:moon_data} (center right).
The penalization strength of the primary attack and the learning rate schedule are defined by 
\begin{equation}
   a \in \big[ \frac{2 \sqrt{d}}{c}, 100  \big] \quad \text{and} \quad  \alpha_{i} = \alpha_{0} \frac{n_{0}}{n_{0} + i} \, ,
\end{equation} 
respectively. For the definition of $c$ recall \cref{eq:defcC} and for the lower bound on $a$ see \cref{sec:penalizationstrength}. For the input dimension $d=2$, we considered up to $2,\!048$ iterations for the primary attack and adjusted $n_0=100$ such that $\sum_{i=1}^{2048} \alpha_{i} > \sqrt{2}$. We chose $\alpha_{0} = 0.01$ for the primary attack.
For the counter attack, we use the parameters defined in the proof of \cref{theorem:boundedness}:
\begin{equation}
   b = \frac{8 \varepsilon}{c} \quad \text{and} \quad \alpha_k = \frac{1}{16(1+\frac{C}{c})^2} \, .
\end{equation} 
We chose $\varepsilon=|| x_{k} - x^{*} || + 2 \cdot 10^{-4}$ individually for each data point $x_0$ (for the justification of the constant offset $2 \cdot 10^{-4}$, see \cref{sec:twomoondim}). The quantiles of calculated $\varepsilon$ values for the different number of iterations are presented in \cref{fig:moon_data} (right). 
The counter attack performs up to $2,\!048$ iterations (independently of the number of iterations of the primary attack) and stops in iteration $j^*$ when $x_{k,j^{*}} \in \F^{(2)}$ for the first time ($x_{k,j^{*}}$ a realization of $X_{k,j^{*}}$).
For all tests, we randomly split the test set into two equally sized and distinct portions $\mX$ and $\bar{\mX}$. For $\bar{\mX}$ and run a primary attack as well as the counter attack generating values $D^{(0,k)}$ and $\bar{D}^{(k,j^*)}$. For $\mX$, we only run a counter attack and obtain $D^{(0,j^*)}$. We used the CleverHans framework \cite{cleverhans2018} which 
includes the CW attack with $p=2$. The implementation contains a confidence parameter $\eta$, reformulating \cref{eq:penality1} into
$f(x) =  \max\{Z_t(x) - \max_{i\neq t} \{ Z_i(x) \} - \eta, \, 0 \} $.
In accordance to our theory, we consider $\eta=0$, recovering \cref{eq:penality1}.

To empirically validate the statement in \cref{theorem:boundedness}, we construct the corresponding $B(x^*,3\varepsilon)$ and use all data points for which $B(x^*,3\varepsilon) \subset \bigcup_{i=1}^s Q_i$ with $x^* \in Q_i, \; i=1,\ldots,s$ is fulfilled.  
%
%
\Cref{fig:moon_data} (right) suggests that more iterations could be considered since stationarity is not yet achieved. However, the CW attack is already computationally costly for $2,\!048$ iterations. Nevertheless, we observe in \cref{fig:moon_results} (left) that the considered number of iterations is sufficient for perfect detection in terms of AUROC. We also registered return rates, i.e., the percentage of cases with successful primary attacks where we obtain $\kappa(x_0)=\kappa(x_{k,j^*})$. In a two class setting, this is guaranteed for all successful primary attacks being close enough to the decision boundary. \Cref{fig:moon_results} (right) shows that the return rate indeed tends to a 100\% for an increasing number of iterations.
We observe in our tests, independently of the number of iterations of the primary attack, that $x_k \in B(x^*,\varepsilon)$ implies $x_{k,j^{*}} \in B(x^*,3\varepsilon)$ for all the data points. Also when the condition $B(x^*,3\varepsilon) \subset \bigcup_{i=1}^s Q_i$ is violated we stopped when $x_{k,j^*} \in \F^{(2)}$ and still found $x_{k,j^*} \in B(x^*,3\varepsilon)$.
For the same tests on the CIFAR10 dataset \cite{CIFAR10} of tiny $32 \times 32$ rgb images with 10 classes, we observe a similar behavior as in \cref{fig:moon_results}, see \cref{fig:cifar_auroc}, although we are not able to check whether our theoretical assumptions hold. We performed at most $1,\!024$ counter attack iterations due to the computational cost of the CW attack. For $1,\!024$ primary attack iterations, only 5 examples did not return to their original class, 3 of them remained in the primary attack's class, 2 moved to a different one. 

\begin{figure}[t]
    \centering
    \subfigure{\includegraphics[trim=1 1 1 1,clip,width=0.24\textwidth]{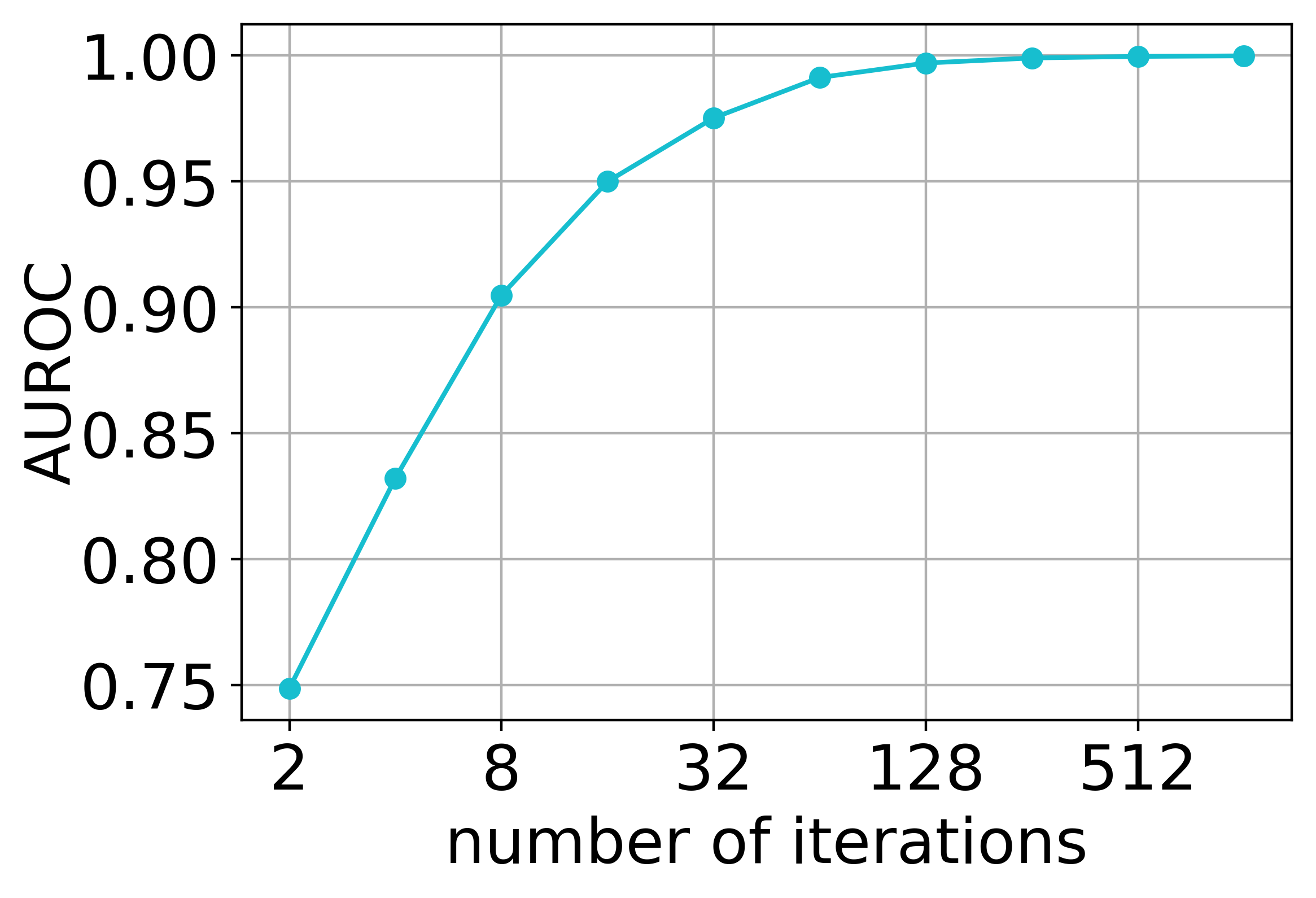}}
    \subfigure{\includegraphics[trim=1 1 1 1,clip,width=0.24\textwidth]{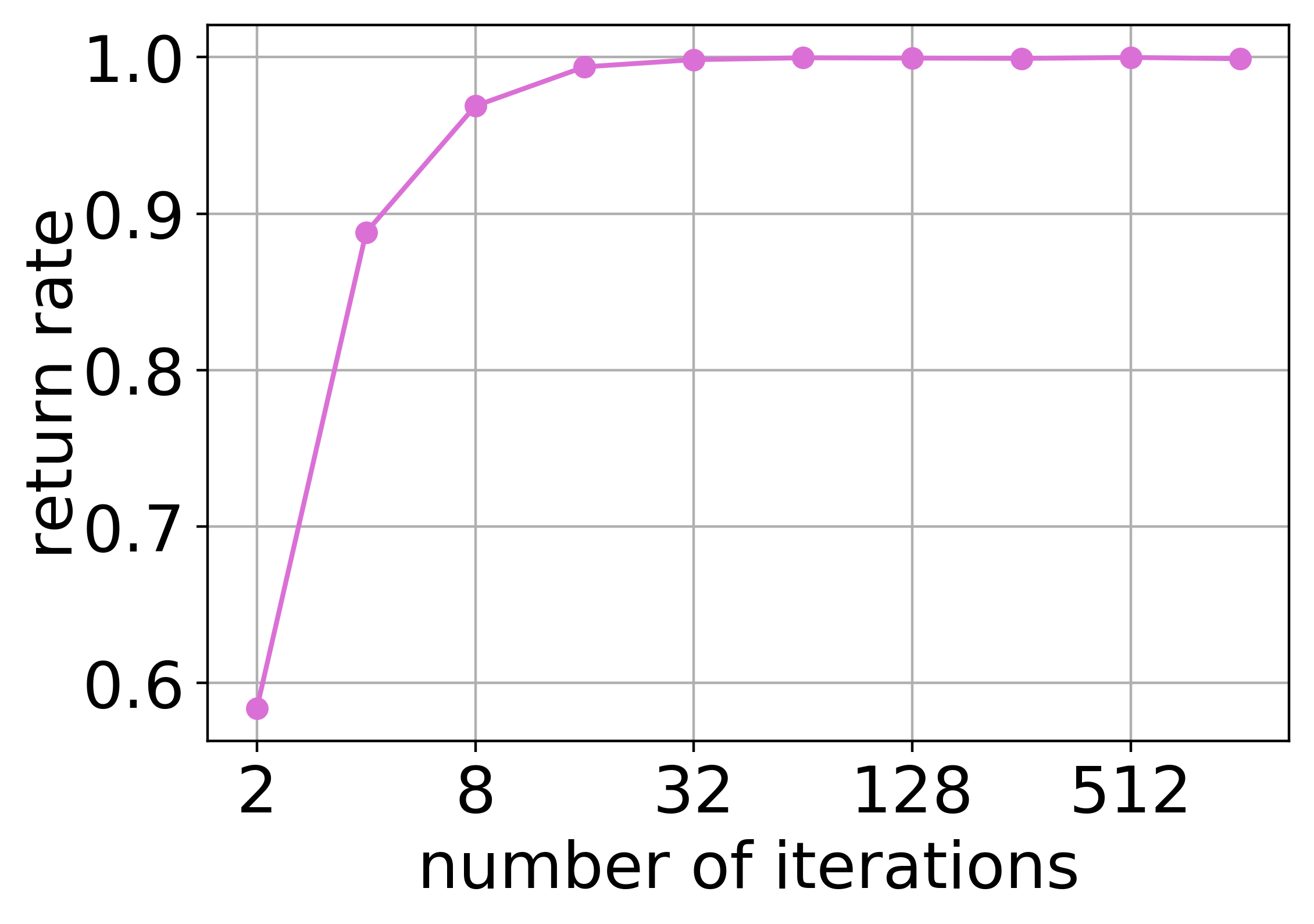}}
    \caption{(left): AUROC values as function of number of primary attack iterations for classifying between $D^{(0,j)}$ and $\bar{D}^{(k,j)}$. (right): Return rates vs.\ number of iterations for CIFAR10. }
    \label{fig:cifar_auroc}
\end{figure}

%
%
%
%

%
\begin{table}[t]
\centering
\scalebox{0.75}{
\begin{tabular}{||c||c||c||c||}
\cline{1-4}
\multicolumn{4}{||c||}{Numbers of samples in $\mX$ and $\bar{\mX}$} \\
\cline{1-4}
& two moons (2D) & CIFAR10 & ImageNet2012 \\
\cline{1-4}
$\ell_0$      & --   & 5000   &     500 \\
$\ell_2$      & 150  & 5000   &     500 \\
$\ell_\infty$ & --   & 1000   &     500 \\
\cline{1-4}
\end{tabular} }
\caption{Numbers of images / data points in $\mX$ and $\bar{\mX}$, respectively.}
\label{tab:Dsize}
\end{table}
\if 0
\begin{table}[t]
\centering
\scalebox{0.75}{
\begin{tabular}{||c||c||c||c||c||}
\cline{1-5}
\multicolumn{5}{||c||}{Success rates} \\
\cline{1-5}
& \multicolumn{2}{c||}{CIFAR10} & \multicolumn{2}{c||}{ImageNet2012} \\
\cline{1-5}
&                 1st attack & 2nd attack    & 1st attack  & 2nd attack \\ 
\cline{1-5}
$\ell_0$      & 100\% & 100\%   &     100\% & 100\% \\
$\ell_2$      & 100\% & 100\%   &     100\% & 100\% \\
$\ell_\infty$ & 100\% & 100\%   &     100\% & 100\% \\
\cline{1-5}
\end{tabular} }
\caption{Success rates of first and second attacks. An attack is considered successful if the predicted class of the attack's output is different from the predicted class of the attack's input.}
\label{tab:successrates}
\end{table}
\fi

\if 0
\begin{figure*}[htb]
    \centering
    \includegraphics[width=0.7\textwidth]{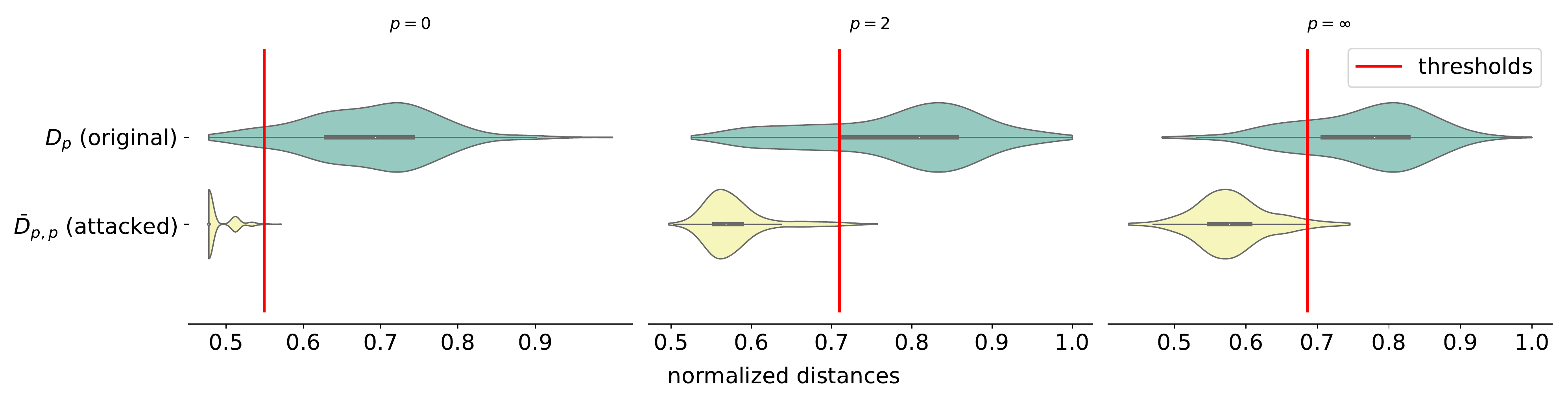}
    \caption{Violin plots displaying the distributions of distances in $D_p$ (top) and $\bar{D}_{p,p}$ (bottom) from \cref{eq:violins} for the ImageNet2012 dataset, $p=0,2,\infty$ in ascending order from left to right.} 
    \label{fig:ImageNetvio}
\end{figure*}
\fi

\if 0
\begin{figure}[htb]
    \centering
    \scalebox{0.8}{
    \includegraphics[width=\linewidth]{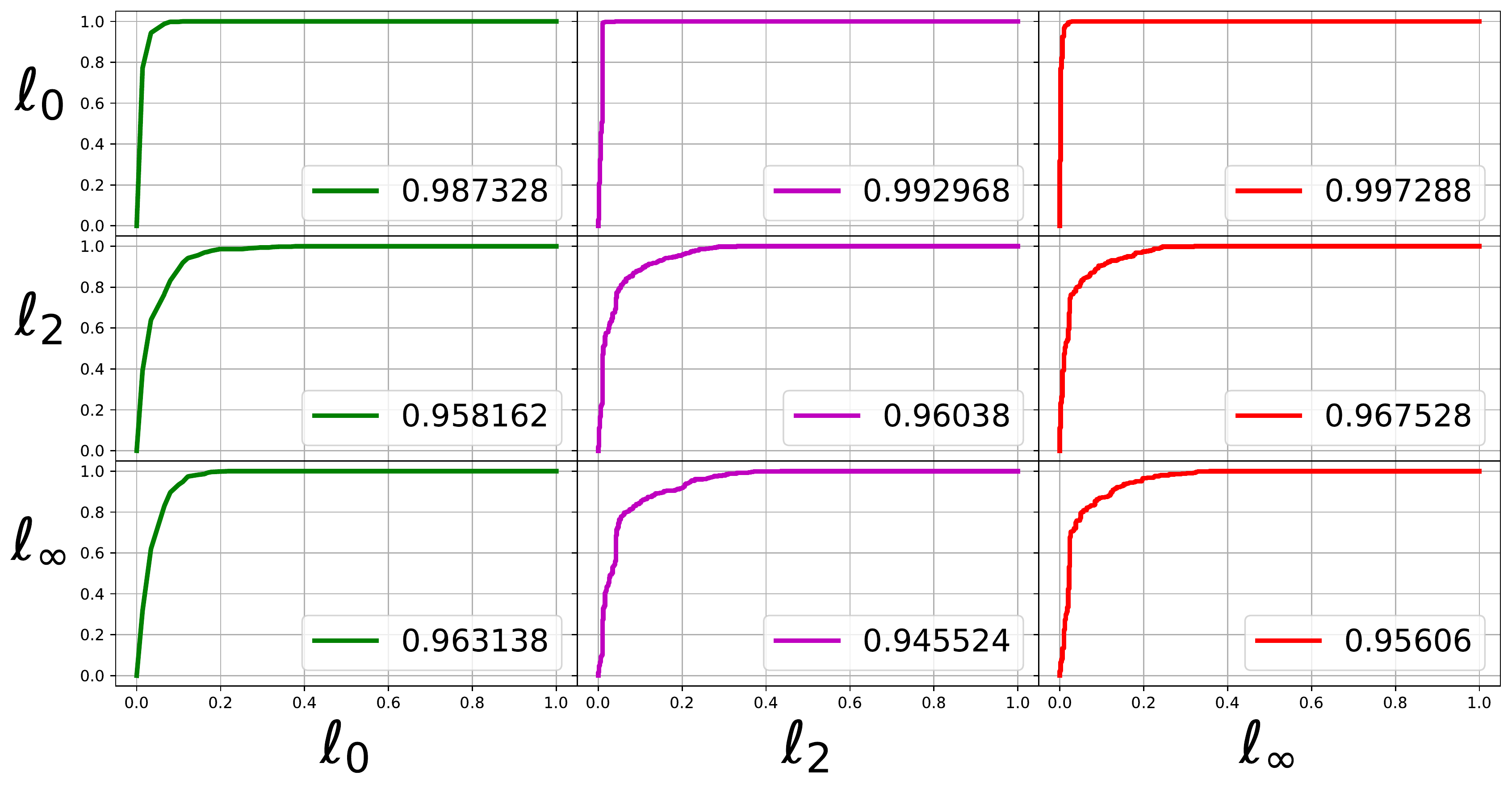}}
    \caption{ROC curves and AUROC values for the different cross attacks performed on the ImageNet2012 dataset. The task is to separate $D_{q}$ and $\bar{D}_{p,q}$, the $\ell_q$ norm is the one used for detection, $q=0,2,\infty$ is the row index and $p=0,2,\infty$ the column index.}
    \label{fig:crossedAttacksImageNet}
\end{figure}
\fi
%
%

%






In order to discuss the distance measures $D^{(0,j)}$ and $\bar{D}^{(k,j)}$ for different $\ell_p$ norms, we now alter the notation. 
Let $\gamma_p: I^n \to I^n$ 
be the function that maps $x \in \mX$ to its $\ell_p$ attacked counterpart for $p=0,2,\infty$. In order to demonstrate the separability of $\mX$ and $\gamma_p(\mXb)$ under a second $\ell_q$ attack, we compute the two scalar sets
\begin{equation} \label{eq:violins}
\begin{aligned}
D_q            = & \{ \mathit{dist}_q(x,\gamma_q(x)) \, : \,  x \in \mX \} \quad \text{and} \\
\bar{D}_{p,q}  = & \{ \mathit{dist}_q(\gamma_p(\bar{x}),\gamma_q(\gamma_p(\bar{x}))) \, : \,  \bar{x} \in \mXb \}
\end{aligned}
\end{equation}
for $q=0,2,\infty$. Besides CIFAR10 we consider ImageNet2012\footnote{\url{http://image-net.org}} with high resolution rgb images. Since CleverHans only provides $\ell_2$ attacks, we altered the code to the one provided with \cite{DBLP:journals/corr/CarliniW16a} and used the included pre-trained models (for further details, see \cref{sec:impl}). From now on we used default parameters and performed $1,\!000$ iterations for both attacks, respectively. For the number of images used, see \cref{tab:Dsize}.
Since the $\ell_p$ norms used in our tests are equivalent except for $\ell_0$, we expect that cross attacks, i.e., the case $p \neq q$ is supposed to yield a good separation of $D_{q}$ and $\bar{D}_{p,q}$ (cf.~\cref{eq:violins}). 
\Cref{tab:crossedAttacks} shows results for cross attacks with AUROC values up to $99.73\%$. Each column shows the detection performance of a norm $\ell_q$.
\begin{table}[t]
    \centering
    \scalebox{0.66}{
    \begin{tabular}{||c||c|c|c||c|c|c||}
        \cline{1-7}
        Attack & \multicolumn{3}{c||}{CIFAR10} & \multicolumn{3}{c||}{ImageNet2012}  \\
        \cline{1-7}
        2nd$\downarrow$ $\backslash{}$ 1st$\rightarrow$ & $p=0$ & $p=2$ & $p=\infty$ & $p=0$ & $p=2$ & $p=\infty$ \\
        \cline{1-7}
        $q=0$ & 91.86\% & 99.62\% & 99.64\% & 98.73\% & 99.30\% & 99.73\% \\
        $q=2$ & 91.25\% & 99.13\% & 99.17\% & 95.82\% & 96.04\% & 96.75\% \\
        $q=\infty$  & 89.23\% & 96.81\% & 96.29\% & 96.31\% & 94.55\% & 95.61\% \\
        \cline{1-7}
    \end{tabular}}
    \caption{AUROC values for the different cross attacks performed  to separate $D_{q}$ and $\bar{D}_{p,q}$, the $\ell_q$ norm is the one used for detection.}
    \label{tab:crossedAttacks}
\end{table}
\begin{table}[t]
\centering
\scalebox{0.75}{
\begin{tabular}{||c||c||c||c||c||}
\cline{1-5}
                      & dataset & $\ell_0$ & $\ell_2$  & $\ell_\infty$\\ 
\cline{1-5}
Returnees & CIFAR10 & $4994$ & $4997$ & $995$ \\ 
Percentage & CIFAR10 & $99.88\%$ & $99.94\%$ & $99.5\%$ \\ 
\cline{1-5}
Returnees & ImageNet2012 & $500$ & $486$ & $488$ \\ 
Percentage & ImageNet2012 & $100\%$ & $97.20\%$ & $97.6\%$ \\
\cline{1-5}
\end{tabular} }
\caption{Number of returnees as well as return rates. All primary attacks were successful.}
\label{tab:classFlips}
\end{table}
In both cases, for CIFAR10 and ImageNet2012, when comparing the different columns of both plots we observe a superiority of the $\ell_2$ norm. In our tests we observe that the $\ell_2$ norm also requires lowest computational effort, thus the $\ell_2$ norm might be favorable from both perspectives.
Noteworthily, there is also only a minor performance degradation when going from CIFAR10 to ImageNet2012 although the perturbations introduced by the $\ell_p$ attacks, in particular for $p=2$, are almost imperceptible, cf.\ also~\cref{fig:attcifar}. The inferiority of the $\ell_0$ attack might be due to the less granular iteration steps, changing pixel values maximally.
In \cref{tab:classFlips} we report the number of returned examples and return rates. 
For both datasets we observe strong return rates although there is no theoretical guarantee in a multi class setting. 
For a more detailed discussion of experiments with images including a comparison with the state-of-the-art, we refer to \cref{sec:further_exp}. Comparisons with other methods are difficult due to the variety in datasets and evaluation metrics considered. However, counter attacks with the CW attack seem to reach current state-of-the-art detection rates.



\section{Conclusion \& Outlook}

We presented a mathematical proof for asymptotically optimal detection of CW attacks via CW counter attacks. In numerical experiments we confirmed that the number of iterations indeed increases the separability of attacked and non-attacked images. Even when further relaxing the theoretical assumptions and considering default parameters, we still obtain detection rates of up to $99.73\%$ AUROC that are in the upper part of the spectrum of detection rates reported in the literature. 
For future work we plan to study the effect of introducing a confidence parameter in the primary CW attack. We expect that the treacherous efficiency of the CW attack can still be noticed in that case. Furthermore we plan to investigate under which conditions stationary points of the CW attack are local minimizers and to consider theoretically the cases $p \neq 2$. Our CleverHans branch is publicly available under \url{https://github.com/kmaag/cleverhans}.





\appendix

\section{Additional Theoretical Considerations}

\subsection{Isolated Stationary Points} \label{sec:isolated}

We now present the proof of the following theorem:

\begin{theorem} \label{theo:isolated}
Let $ x^* \in \cap_{j=1}^s Q_j,\; x \in (0,1)^n $ be a stationary point. If the corresponding gradients $ \{g_j\}_{j=1}^s $ are linearly independent then $ x^* $ is isolated for $ a $  large enough. 
\end{theorem}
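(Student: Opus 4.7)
The plan is to argue by contradiction. Assume $x^*$ is not isolated, so there exists a sequence $\{y_k\}\subset S$ with $y_k\to x^*$, $y_k\neq x^*$. Since $x^*\in(0,1)^n$ and $(0,1)^n$ is open, we may assume every $y_k\in(0,1)^n$, so $N_{I^n}(y_k)=\{0\}$ and stationarity reduces to $0\in\nabla_C F(y_k)$. Only the polytopes $Q_1,\dots,Q_s\in\pol$ touch $x^*$, so by passing to a subsequence the active index set $J_k := \{j : y_k\in Q_j\}$ equals a fixed $J\subseteq\{1,\dots,s\}$, and each $y_k$ lies in the relative interior of the face $\bigcap_{j\in J}Q_j$. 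On this relative interior $\nabla_C f(y_k)$ equals the convex hull of $\{g_j\}_{j\in J}$ (augmented by $\{0\}$ when $f(y_k)=0$), so stationarity at $y_k$ reads $2(y_k-x_0)+a\,G_J\lambda^{(k)}=0$ for some $\lambda^{(k)}$ in the appropriate simplex over $J$, where $G_J$ denotes the matrix with columns $\{g_j\}_{j\in J}$.

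Second, I would invoke linear independence. Since $\{g_j\}_{j=1}^s$ is linearly independent, so is $\{g_j\}_{j\in J}$; thus $G_J$ has full column rank and the stationarity equation has at most one solution $\lambda^{(k)}=-\tfrac{2}{a}(G_J^\top G_J)^{-1}G_J^\top(y_k-x_0)$ for each $y_k$. The analogous system at $x^*$ has unique multipliers $\mu\in\mathbb{R}^s$; extending $\lambda^{(k)}$ to $\tilde\lambda^{(k)}\in\mathbb{R}^s$ by zero on $J^c$ and using continuity together with uniqueness of the full KKT system at $x^*$, we obtain $\tilde\lambda^{(k)}\to\mu$, so in particular $\mu_j=0$ for $j\notin J$. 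The role of ``$a$ large enough'' enters here: $\|\mu\|=O(1/a)$, so for $a$ large one has $\sum_{j\in J}\mu_j<1$ strictly and each $\mu_j$ with $j\in J$ is strictly positive; the same then holds for $\lambda^{(k)}$ for $k$ large, so no sign nor summation constraint on $\lambda^{(k)}$ is active near $x^*$.

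Third, I would close the argument with a local non-degeneracy count. Stationarity confines $y_k$ to the affine image $L_J := x_0-\tfrac{a}{2}\,G_J(\Delta_J)$, of affine dimension $|J|-1$ when $\sum\lambda^{(k)}=1$ and $|J|$ when $\sum\lambda^{(k)}<1$. Simultaneously, $y_k$ lies on the local face $\bigcap_{j\in J}Q_j$, cut out in a neighbourhood of $x^*$ by the ReLU-hyperplane equations that vanish at $x^*$. Full column rank of $G_J$ combined with the fact that these hyperplane normals sit in general position with respect to $\mathrm{range}(G_J)$, a consequence of the linear independence of $\{g_j\}_{j=1}^s$ in $\mathbb{R}^n$, implies that $L_J$ meets the face transversally at $x^*$, so their local intersection equals the single point $x^*$. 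This forces $y_k=x^*$ for large $k$, contradicting $y_k\neq x^*$.

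The main obstacle is the clean execution of Step~3: one has to identify the exact linear relations that locally cut out the face $\bigcap_{j\in J}Q_j$, verify that together with the stationarity equation they form a system whose Jacobian at $x^*$ is invertible, and check that this invertibility is genuinely a corollary of linear independence and of ``$a$ large enough''. A secondary bookkeeping point is to ensure that boundary cases (some $\lambda_j^{(k)}$ hitting $0$, or $\sum_j\lambda_j^{(k)}$ hitting $1$) are actually absorbed by the pigeonhole reduction of Step~1, because they correspond to a change of active set $J$ that is handled by a different subsequence.
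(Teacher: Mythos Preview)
Your strategy of arguing by contradiction via a sequence with fixed active set $J$ is sound and overlaps with how the paper begins. However, the proposal leaves the essential content---your Step~3---as an acknowledged ``obstacle,'' and the sketch you give for it contains a genuine gap.

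The transversality of $L_J$ with the face $\bigcap_{j\in J}Q_j$ is \emph{not} an automatic consequence of linear independence of $\{g_j\}_{j=1}^s$. The hyperplanes cutting out the face are determined by the polytopal structure, and you have not related them to the $g_j$. There \emph{is} a relation---continuity of $f$ forces the normal of a codimension-one face between $Q_i$ and $Q_j$ to be parallel to $g_i-g_j$---but exploiting it still requires a dimension count that distinguishes whether $f(y_k)=0$ or $f(y_k)>0$, and you have not carried this out. A quick check shows the count is delicate: for $|J|=2$ and $f(y_k)=0$ the set $L_J$ has dimension $2$ while the face has dimension $n-1$, so transversality alone would give a one-dimensional intersection; you then need the extra constraint $f=0$ to cut this down. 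Separately, your claim in Step~2 that ``each $\mu_j$ with $j\in J$ is strictly positive'' is a non sequitur from $\|\mu\|=O(1/a)$: smallness of $\mu$ gives $\sum_j\mu_j<1$ but says nothing about strict positivity of individual components.

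The paper bypasses the transversality machinery with a direct case analysis. Writing $\tilde x=x^*+\varepsilon d$ for a nearby stationary point, it distinguishes $\tilde x\in\partial\F$, $\tilde x\in\F$, and $\tilde x\in\NF$. In the boundary case the key observation is that $d$, being tangent to $\partial\F$, satisfies $d^\top g_i=0$ for all gradients $g_i$ active at $\tilde x$; subtracting the two stationarity relations and taking the inner product with $d$ yields $0=2\varepsilon\|d\|^2$ (when the active sets coincide) or an expression independent of $\varepsilon$ (when they do not), both contradictions. The case $\tilde x\in\F$ is immediate since $\nabla f$ vanishes there. The case $\tilde x\in\NF$ is where ``$a$ large'' genuinely enters: at $x^*\in\partial\F$ one has $\sum_j\lambda_j=O(1/a)\ll1$, whereas at $\tilde x\in\NF$ the zero vector is \emph{not} in $\nabla_C f(\tilde x)$, forcing $\sum_j\beta_j=1$; these are incompatible as $\varepsilon\to0$ by uniqueness of the multipliers under linear independence. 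This is the correct role of the parameter $a$, and it differs from the use you propose in Step~2.
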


\begin{proof}
Denote $ C = dist_p(x_0,x^*). $
A point $ x^* $ is an isolated stationary point if there exists $ \varepsilon > 0 $ such that $ x^* $ is the only stationary point in $ B(x^*,\varepsilon).$ Let us first assume that $ x^* \in \partial {\cal F}.$ Then $ \nabla_C f(x^*) = \{a g_1,\ldots,a g_s,0\}$ and there exist $ \lambda_1, \ldots, \lambda_s \geq 0, \; \sum_{j=1}^s \lambda_j \leq 1 $ such that 
\begin{equation} \label{stac1}
a \lambda_1 g_1 + \ldots a \lambda_s g_s = 2(x_0 - x^*).
\end{equation}
Assume now that $ x^* $ is not isolated. Let $ \tilde{x} \in \partial {\cal F} $ be another stationary point such that $ \dist(\tilde{x},x^*) = \varepsilon$ for arbitrary small $ \varepsilon >0,$ and $ x^*, \tilde{x}$ belong to the intersection of some set of polytopes, say $ x^*, \tilde{x} \in \cap_{i=1}^{\ell} Q_i, \; 1 \leq \ell \leq s. $ Then there exists a vector $ d $ such that $ \|d\|_2=1, d \subset \partial {\cal F}$ and $ \tilde{x} = x^* + \varepsilon d.$ Given that $ \tilde{x} $ is stationary, we have $ \nabla_C f(\tilde{x}) = \mathrm{conv}\{a g_1,a g_2, \ldots, ag_{\ell}, 0\}$ and there exists $ \beta_1, \ldots, \beta_{\ell}  \geq 0, \sum_{i=1}^{\ell} \beta_i \leq 1,  $ such that 
\begin{equation} \label{stac2}
a \beta_1 g_1 + \ldots + a \beta_{\ell} g_{\ell} = 2(x_0 - \tilde{x}) = 2 (x_0 - x^* - \varepsilon{}d). 
\end{equation}
Given that $ d \subset \partial {\cal F},$ we have $ d^T g_1 =0, \ldots, d^T g_{\ell} = 0. $
Let us now consider separately two cases, $ s = \ell $ and $ s > \ell. $
Assume first that $ \ell = s. $ Then, combining \cref{stac1} and \cref{stac2} we get
\begin{equation} \label{stac3}
    a (\lambda_1 - \beta_1) g_1 + a (\lambda_2 - \beta_2)  g_2 + \ldots +  a (\lambda_s - \beta_s) g_s = 2 \varepsilon d. 
\end{equation}
From \cref{stac3} we have 
\begin{align}
    & a (\lambda_1 - \beta) d^T g_1 + a (\lambda_2 - \beta_2) d^T g_2 + \ldots \nonumber  \\
    & \ldots + a (\lambda_s - \beta_s) d^T g_s
    = 2 \varepsilon \|d\|^2,
\end{align}
and the orthogonality of $ d $ implies 
\begin{equation}
    0 = 2 \varepsilon \|d\|^2 \, ,
\end{equation}
which is clearly not true. In the second case, $ \ell < s, $ we have $ d^T g_1 =0, \ldots, d^T g_{\ell} = 0 $ and from \cref{stac1} and \cref{stac2}, 
\begin{equation} \label{stac4}
a \lambda_{\ell+1} d^T g_{\ell+1} + \ldots + a \lambda_s d^T g_s = 2 \varepsilon \|d\|_2^2. 
\end{equation} 
On the other hand, \cref{stac1} yields $ a \lambda_{\ell+1} g_{\ell+1} + \ldots + a \lambda_s g_s = 2(x_0 - x^*) - a \sum_{i=1}^{\ell} \lambda_i g_i$, and therefore
\begin{eqnarray}
2 \varepsilon \|d\|_2^2 & = & a \sum_{i=\ell+1}^s \lambda_i d^T g_i \nonumber  \\
&= & 2 d^T (x_0 - x^* - a \sum_{j=1}^{\ell} \lambda_j g_j) \nonumber \\ 
&= & 2 d^T(x_0 - x^*) \, , 
\end{eqnarray}
which can not be true for arbitrary small $ \varepsilon $ as the right hand side expression does not depend on $ \varepsilon. $ 
 
Assume now that there exists $ \tilde{x} \in B(x^*,\varepsilon) \cap {\cal F}. $ As in that case we have $ f(\tilde{x}) = 0 $ and $ \nabla f(\tilde{x}) = 0$ we conclude that $ x_0 - \tilde{x} =0,$ which is clearly impossible. 
 
Finally, assume that there exists a stationary point $ \tilde{x} \in B(x^*,\varepsilon) \cap {\cal N}{\cal F}. $ We have two possibilities now. First, if $ \tilde{x} $ belongs to interior of one of the polytopes $ Q_1,\ldots,Q_s $ then $f $ is differentiable at $ \tilde{x} $ and $ F $ is locally a smooth strictly convex function. Thus $ \tilde{x} $ is an isolated stationary point, which is not possible as $ \tilde{x} \in B(x^*,\varepsilon). $ The remaining case to analyse is  $ \tilde{x} \in \cap_{j=1}^{s} Q_j.  $ Assume that $ a $ is large enough such that $ \sum_{j=1}^s \lambda_j = \delta \ll 1. $  Let $ \tilde{x} \in \cap_{j=1}^{\ell} Q_j, \; 1\leq \ell \leq s$ be another stationary point in the ball $B(x^*,\varepsilon).  $  In this case we have $ \nabla_C f(\tilde{x}) = \mathrm{conv}\{a g_1, \ldots, a g_{\ell}\}) $ and given that $ \tilde{x} \in S,$ there exist $ \beta_1,\ldots,\beta_{\ell} \geq 0, \sum_{j=1}^{s} \beta_j = 1, $ such that 
\begin{equation} \label{stac5}
a \beta_1 g_1 + \ldots a \beta_s g_{\ell} = 2(x_0 - \tilde{x}) \, . 
\end{equation}
Consider the following system of linear equations
\begin{equation} \label{stac6}
a \gamma_1 g_1 + \ldots \gamma_s g_s = 0 \, . 
\end{equation}
Given that $ g_1,\ldots,g_s$ are linearly independent and the system is overdetermined ({\bf $n \gg s$}), the only solution is $ \gamma_1=\ldots \gamma_n = 0.$ Using \cref{stac1} and \cref{stac5} we have
\begin{align}
    2(x^* - \tilde{x}) & =  2(x^* - x_0 + x_0 - \tilde{x}) \\
    & = a (\beta_1 - \lambda_1) g_1 + \ldots + a (\beta_s - \lambda_s) g_s \, , 
\end{align}
with $ \beta_i = 0 $ for $ \ell < i \leq s. $ Given that $ x^* - \tilde{x} $ is arbitrary small, $ \dist_2(x^*,\tilde{x}) = \varepsilon,$ the above system is arbitrary close to \cref{stac6} and hence as $ \varepsilon \to 0 $ we must have that its solution converges to the solution of \cref{stac6}, i.e.,
\begin{equation}
   \beta_1 - \lambda_1 \to 0, \ldots, \beta_s - \lambda_s \to 0 \, . 
\end{equation}
But $ \sum_{j=1}^s \lambda_j = \delta \ll 1 $ and $ \sum_{j=1}^s \beta_j = 1$ which is clearly impossible. 
So far, we have proved that any stationary point $ x^*  \in \partial {\cal F} $ is isolated. But the arguments used in the proof also cover the remaining cases, i.e., $ x^* \in {\cal F} $ and $ x^* \in {\cal N} {\cal F}. $ 
\end{proof}

\subsection{Information Theoretic Derivation of the AUROC} \label{sec:continuousAUROC}

Let $X$ and $Y$ be two real valued random variables with distribution $X_{*}P$ and $Y_{*}P$, respectively. We denote the cumulative density functions of $X$ and $Y$ by $F_{X}(\tau) = P(X \leq \tau)$ and $F_{Y}(\tau) = P(Y \leq \tau)$. A point on the receiver operator curve induced by a threshold $\tau$ measures, how well this threshold separates values of $Y$ from values of $X$. Suppose $Y$ is suspected to be statistically larger than $X$. Then the true positive rate for the recognition of $X$ with the simple threshold rule based on the threshold value $\tau$ is $P(X \leq \tau)$, whereas the false positive rate is $P(Y \leq \tau)$. The point of the ROC-curve is then given by $(F_{X}(\tau), F_{Y}(\tau))$ and variation of $\tau$ yields the entire curve. In order to determine the area under the receiver operator curve, we approximate the surface integral as indicated in \cref{fig:aurocderivation} and in the limit of the Riemann integral we obtain 
\begin{equation} \label{eq:informationtheoreticAUROC}
\begin{aligned}
    \text{AUROC}(Y,X) & = \int_{\mathbb{R}} F_{X}(\tau) \ d F_{Y}(\tau) \\
    & = \int_{\mathbb{R}} F_{X}(\tau) \ d Y_{*}P(\tau) \, ,
\end{aligned}
\end{equation}
where in the last step we replaced the Riemann integral with the more general Lebesque integral, which of course gives the same value. Note that any monotone function is of bounded variation, thus the Riemann integral representation of the AUROC always exists.
\begin{figure}
\centering
\scalebox{0.8}{
\begin{tikzpicture}
\draw[] (5,5) arc (-270:-180:5.0);
\draw [-latex] (0,0) -- (0,5.5);
\draw [-latex] (0,0) -- (5.5,0);
\draw [dotted,gray] (5,0) -- (5,5);
\draw [dotted,gray] (0,5) -- (5,5);
\draw [dotted,gray] (0,0) -- (5,5);
\draw [dotted,blue] (1,0) -- (1,3);
\draw [dotted,blue] (1.8,0) -- (1.8,3.8);
\draw [blue,fill=blue] (1.4,3.43) circle (0.05);
\node [blue] [rotate=45] at (1.4,3.8) {$F_{X}( \tau )$};
\node [blue] at (1.4,-0.3) {$ \Delta F(Y \leq \tau )$};
\node at (5,-0.5) {FPR = $F(Y \leq \tau )$};
\node [rotate=90] at (-0.5,5) {TPR = $F_{X}( \tau )$};
\end{tikzpicture}}
\caption{A sketch of the information theoretic definition of AUROC. $\Delta F(Y \leq \tau)$ denotes a segment on the horizontal axis. From the limit $\Delta F(Y \leq \tau) \to 0$ as in the Riemann integral, we obtain \cref{eq:informationtheoreticAUROC}.}
\label{fig:aurocderivation}
\end{figure}

\section{Further Numerical Experiments and Details}
\label{sec:further_exp}

\begin{figure*}[t]
    \centering
    \includegraphics[trim=10 1 1 1,clip,width=0.74\textwidth]{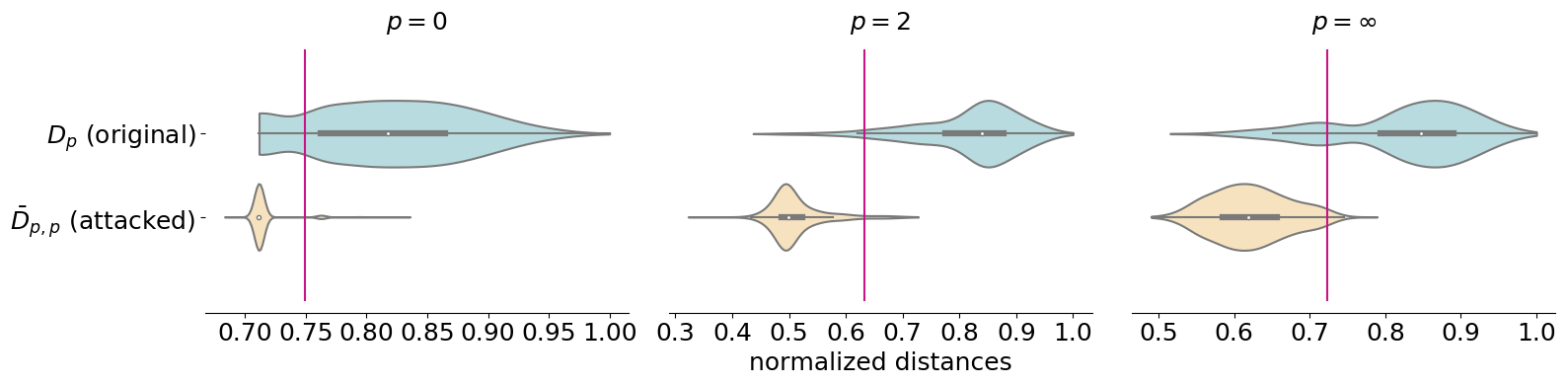}
    \includegraphics[width=0.25\linewidth]{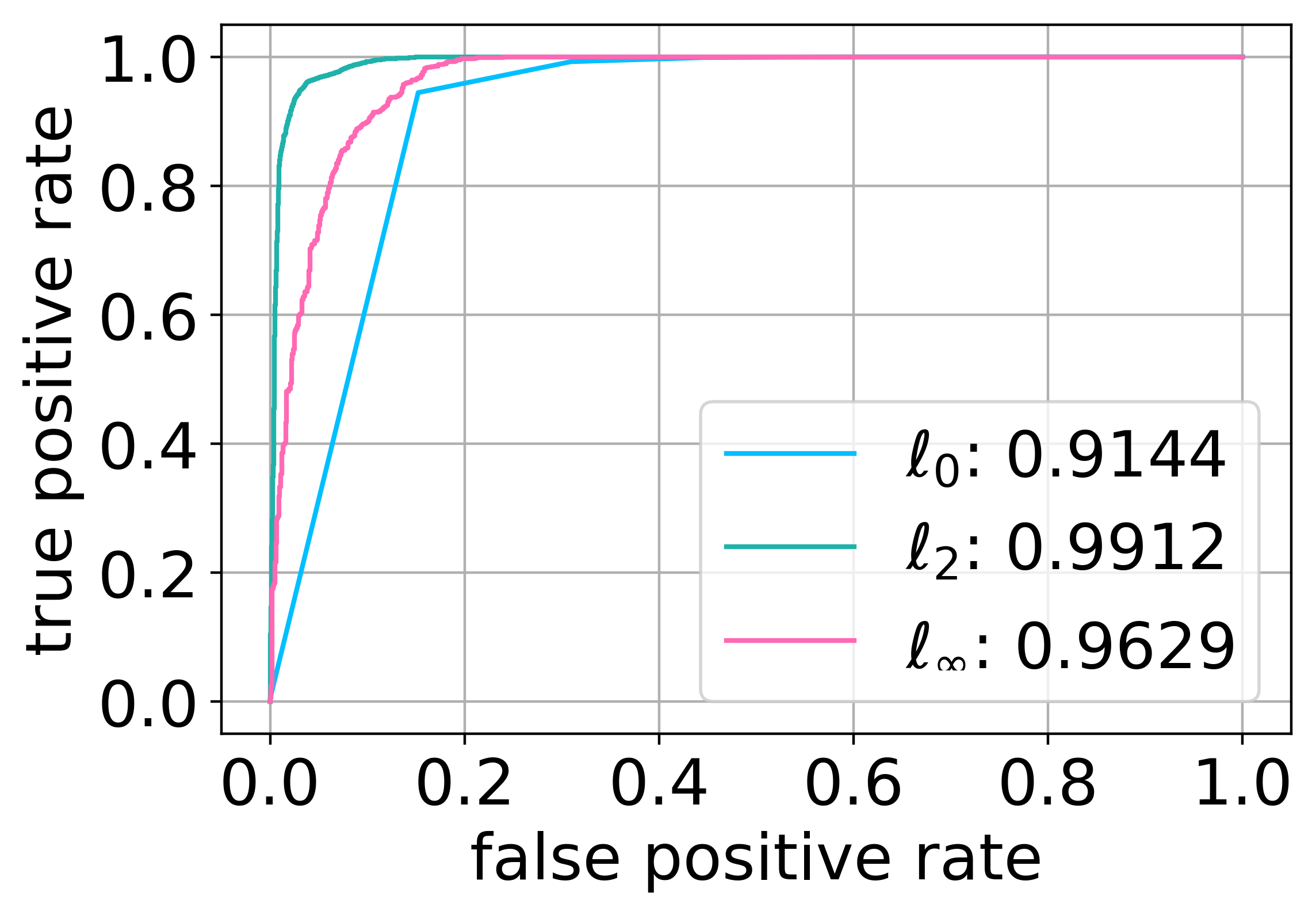}
    \caption{(left): Violin plots displaying the distributions of distances in $D_p$ (top) and $\bar{D}_{p,p}$ (bottom) from \cref{eq:violins} for the CIFAR10 dataset, $p=0,2,\infty$ in ascending order from left to right. ImageNet2012 violins look similar. (right): Corresponding ROC curves and AUROC values.}
    \label{fig:CIFAR10vio}
\end{figure*}

\subsection{Choice of Penalization Strength of the Primary Attack.} \label{sec:penalizationstrength}

In \cref{sec:theocons} we noted that the penalization strength $a$ needs to be large enough. For numerical experiments we need to derive a lower bound. A stationary point $x^*$ needs to fulfill 
\begin{equation}
2(x_0-x^*) = a \sum_i^s \lambda_i g_i(x)
\end{equation} 
for $\lambda_i\geq0$ and $\sum_{i=1}^t \lambda_i \leq 1 $ where $g_i$ is the gradient of the $i$th polytopes $Q_i$ and w.l.o.g.\ $x^* \in Q_i$, $i=1,\ldots,t$, $t\geq1$. Hence, a penalization strength $a$ is sufficient if it fulfills
\begin{equation}
2 \norm{x_0-x^*} \leq 2 \sqrt{d} \leq a \cdot c
\end{equation}
where $d$ is the input dimension and $c$ as in \cref{eq:defcC} the smallest gradient norm of all polytopes that intersect with the decision boundary. It follows that
\begin{equation}
a = \frac{2\sqrt{d}}{c}
\end{equation}
is sufficiently large.

\subsection{Technical Details}
\label{sec:twomoondim} \label{sec:impl}

\if 0
\begin{figure}[t]
    \centering
    \subfigure{\includegraphics[trim=1 1 1 1,clip,width=0.22\textwidth]{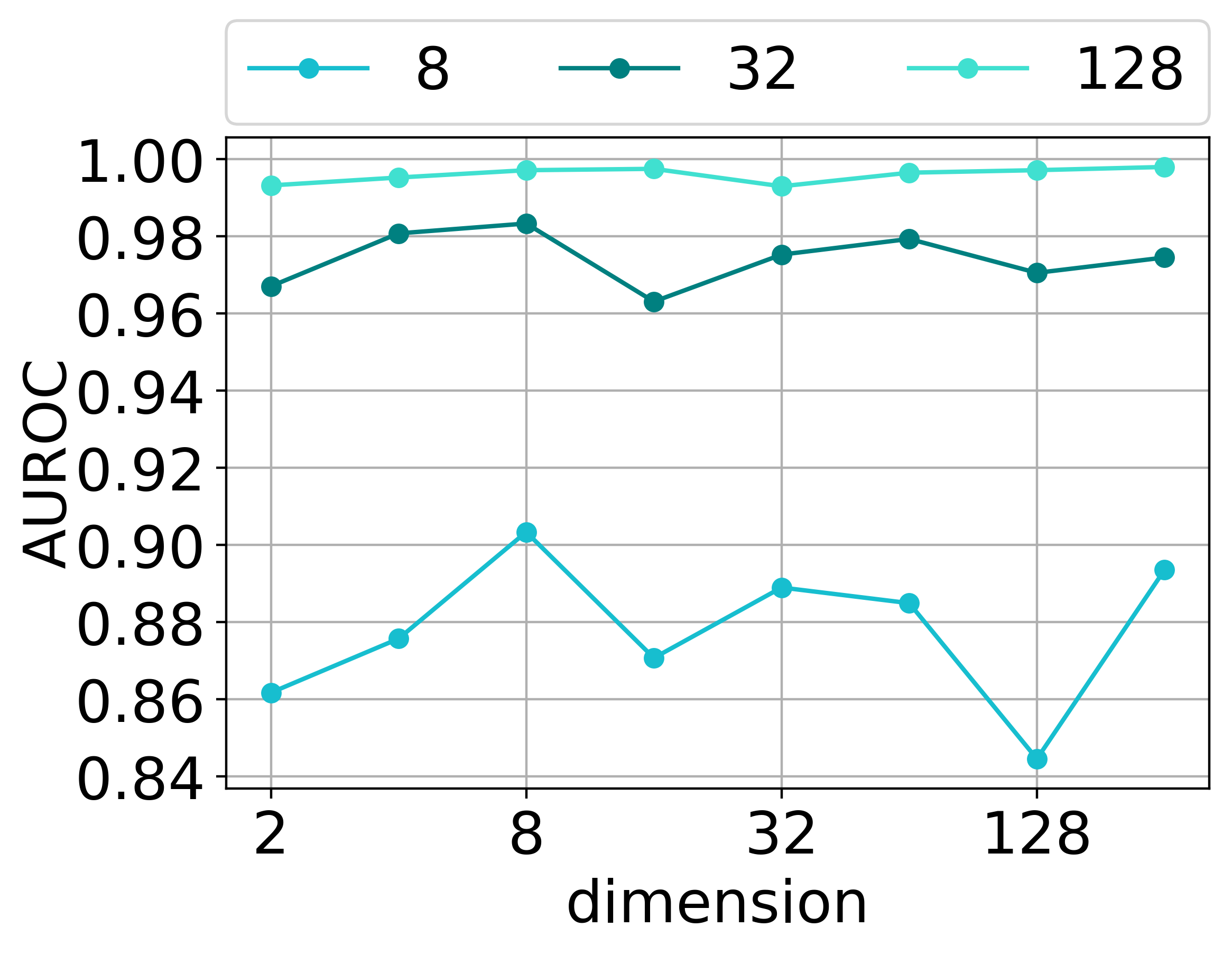}}
    \subfigure{\includegraphics[trim=1 1 1 1,clip,width=0.22\textwidth]{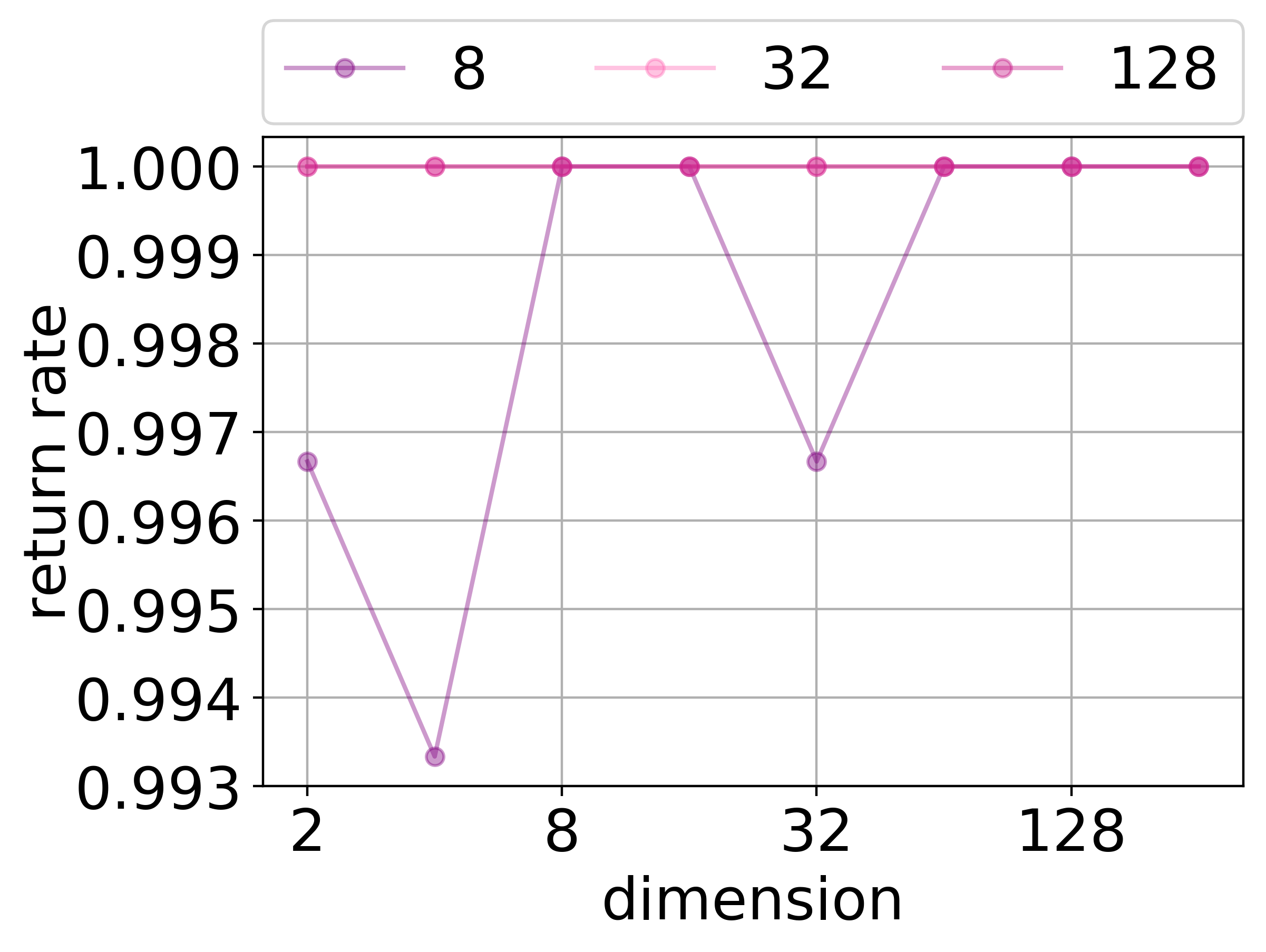}}
    \caption{(left): AUROC values as function of dimensions for classifying between $D^{(0,k)}$ and $\bar{D}^{(k,j)}$. (right): Return rates as function of dimensions for the Two Moons (4-256D). Both panels show results for different numbers of primary attack iterations, i.e., 8, 32 and 128.}
    \label{fig:dims_auroc}
\end{figure}

\begin{figure}[t]
    \centering
    \includegraphics[trim=1 1 1 1,clip,width=0.47\textwidth]{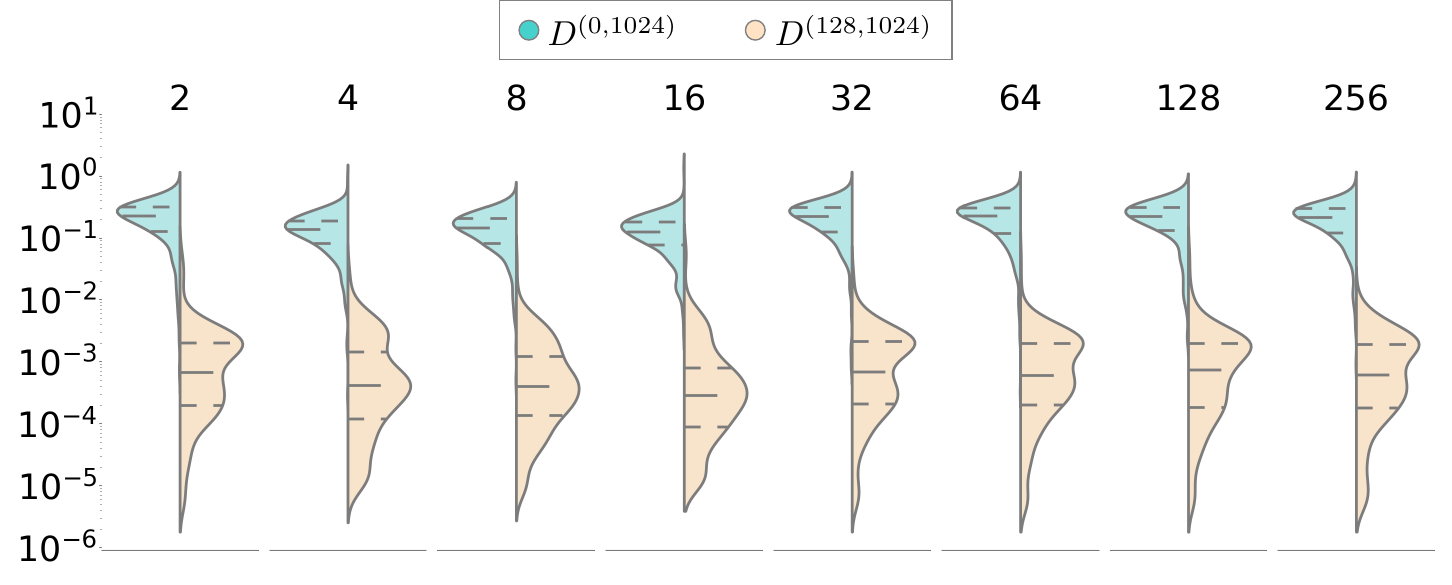}
    \caption{Violin plots depicting the distributions of $D^{(0,128)}$ and $D^{(128,1024)}$ for different dimensions, displayed at the top.}
    \label{fig:dims_vio}
\end{figure}

In this study we consider an extension of the 2D two moons example to several dimensions with $300$ examples per class. The problem dimension $d$ ranges from $2$ to $256$. Intuitively, one would expect that an increase in the number of dimensions offers more directions to the primary attack that are perpendicular to the manifold containing the data, therefore producing smaller distances $D^{(0,k)}$. Thus, the detection of primary attacks via counter attacks might become more difficult as the number of dimensions increases. In order to investigate this effect we extended the 2D two moon example by adding further $d-2$ dimensions and adding noise to them. The noise is Gaussian distributed with mean $0.5$, standard deviation $0.1$ and clipped to $[0,1]$. 
For each number of dimensions $d$, we trained a neural network with one hidden layer containing $4d$ neurons on $10,\!000$ training examples.
\fi

In contrast to the 2D experiments in \cref{sec:numex} of the main text, we cannot compute the stationary points in higher dimensions anymore. In particular, the CleverHans framework does not allow to easily access the weights of the neural network. Therefore, in the 2D example we had to discretize the domain up to an order of $10^{-4}$ and compute decision and polytope boundaries via network inference. Hence, all presented numbers for the 2D case are only accurate up to the discretization error. In higher dimensions we cannot proceed similarly. Hence we adjust the following parameters: For the counter attack we set the initial learning rate $\alpha_{0} = 0.01$ (which is the same value as for the primary attack). For parameters $a$ and $b$ we run a binary search in range $0$ to $10^{10}$ (default in the CleverHans framework). 

\if 0
For this particular example of data generating process, \cref{fig:dims_vio} displays almost no trend towards producing smaller perturbations $D^{(0,1024)}$ by the primary attack when increasing the number of dimensions.
For $d=2$ the blue violin representing $D^{(0,1024)}$ has a median of $0.25$, for $d=2$ the median is $0.23$.
In all dimensions we still observe a descent separation between $D^{(0,1024)}$ and $D^{(128,1024)}$, indicating that the counter attack also works well in higher dimensions where the assumptions made in the theory do not necessarily hold anymore. For that test we considered $128$ attack iterations.
In \cref{fig:dims_auroc} we present experiments for three different primary attack iterations, $k=8,32,128$, again varying $d$ from $2$ to $256$. Even when considering different numbers of primary attack iterations, we do not observe any strong dependence on $d$ for a fixed number of iterations. There is a mild fluctuation over the number of dimensions which probably can be attributed to the difference in the trained DNNs. With an increasing number of primary attack iterations, the separability between $D^{(0,1024)}$ and $\bar{D}^{(k,1024)}$ improves, as in the 2D two moons example. Except for the case of $8$ primary iterations, the return rates are at $100\%$. Hence the trend observed in \cref{fig:dims_vio} does not hamper the detection performance of the counter attack.
\fi



For the CIFAR10 dataset \cite{CIFAR10} consisting of tiny $32\times32$ rgb images from 10 classes, containing 50k training and 10k test images, we used a network with 4 convolutional layers and two dense ones as included and used per default in the code. For the ImageNet2012\footnote{http://image-net.org} high resolution rgb images we used a pre-trained Inception network (\cite{inception}, trained on all $1,\!000$ classes) All networks were trained with default train/val/test splittings. For all tests we used default parameters.
For each of the two datasets, we randomly split the test set into two equally sized portions $\mX$ and $\bar{\mX}$. For one portion we compute the values $D^{(0,j)}$ and for the other one $D^{(0,k)}$ and $\bar{D}^{(k,j)}$, such that they refer to two distinct sets of original images. Since CW attacks can be computationally demanding, we chose the sample sizes for $\mX$ and $\bar{\mX}$ as stated in \cref{tab:Dsize} in the main text.

\subsection{CIFAR10 and ImageNet2012 $\ell_p$ Counter Attacks.}

We consider the case where the primary and counter attack share the same norm, i.e., $p=q$, and present a more detailed evaluation for CIFAR10 and ImageNet.
First we note that the success rates of $\gamma_p(\mXb)$, i.e., the percentages where $\kappa(\bar{x}) \neq \kappa(\gamma_p(\bar{x}))$ for $\bar{x} \in \mXb$, equal a $100\%$
for all attacks $\gamma_p$, $p=0,2,\infty$, see \cref{tab:successrates}. 

\begin{table}[t]
\centering
\scalebox{0.85}{
\begin{tabular}{||c||c||c||c||c||}
\cline{1-5}
\multicolumn{5}{||c||}{Success rates} \\
\cline{1-5}
& \multicolumn{2}{c||}{CIFAR10} & \multicolumn{2}{c||}{ImageNet2012} \\
\cline{1-5}
&                 1st attack & 2nd attack    & 1st attack  & 2nd attack \\ 
\cline{1-5}
$\ell_0$      & 100\% & 100\%   &     100\% & 100\% \\
$\ell_2$      & 100\% & 100\%   &     100\% & 100\% \\
$\ell_\infty$ & 100\% & 100\%   &     100\% & 100\% \\
\cline{1-5}
\end{tabular} }
\caption{Success rates of primary and counter attacks. An attack is considered successful if the predicted class of the attack's output is different from the predicted class of the attack's input.}
\label{tab:successrates}
\end{table}

%
%
The separability of original data $\mX$ and attacked data $\gamma_p(\mXb)$ is visualized by violin plots for $D_p$ and $\bar{D}_{p,p}$ in \cref{fig:CIFAR10vio} (left) for CIFAR10.
For ImageNet2012 we observe a similar behavior. For $p=0,2$, we observe well-separable distributions. For $p=\infty$ this separability seems rather moderate. When looking at absolute $\ell_\infty$ distances, we observe that the distribution peaks roughly at $0.1$. For $\ell_2$, the absolute distances are much higher. This is clear, since the $\ell_\infty$ norm is independent of the image dimensions whereas the other norms suffer from the curse of dimensionality. This can be omitted by re-normalizing the shrinkage coefficient for $\ell_p$ distance minimization. However it also underlines the punchline of our method, the more efficient the iterative minimization process, the better our detection method -- the efficiency becomes treacherous. Less efficient attacks are not suitable detectors, however such attacks can potentially be detected by other detection methods as they tend to be more perceptible. 

Next, we study the performance in terms of relative frequency of correctly classified attacked / original images for two different thresholds, as well as in terms of the area under receiver operator characteristic curve (AUROC, \cite{DBLP:conf/icml/DavisG06}) which is threshold independent.
In all further discussions, true positives are correctly classified attacked images and true negatives are correctly classified original (non-attacked) images. More precisely, we fix the following notations:
%
\begin{itemize}
    \item threshold: value $t$ used for discriminating whether $d \in D_{q}$ or $d \in \bar{D}_{p,q}$ \,,
    \item true  positive: $TP = | \{ d \in \bar{D}_{p,q} \, : \, d < t    \} |$ \,,
    \item false negative: $FN = | \{ d \in \bar{D}_{p,q} \, : \, d \geq t \} |$ \,,
    \item true  negative: $TN = | \{ d \in D_{q}    \, : \, d \geq t \} |$ \,,
    \item false positive: $FP = | \{ d \in D_{q}    \, : \, d < t    \} |$ \,,
    \item accuracy: $\frac{TP + TN}{TP + FN + TN + FP}$ \,,
    \item precision: $\frac{TP}{TP + FP}$ \,,
    \item recall: $\frac{TP}{TP + FN}$ \,.
\end{itemize}

\begin{table}[t]
\centering
\scalebox{0.63}{
\begin{tabular}{||c||c||c||c||c||c||c||}
\cline{1-7}
\multicolumn{7}{||c||}{Sensitivity \& Specificity values} \\
\cline{1-7}
& \multicolumn{3}{c||}{threshold choice no. 1} & \multicolumn{3}{c||}{threshold choice no. 2} \\
\cline{1-7}
                 & $\ell_0$ & $\ell_2$  & $\ell_\infty$ & $\ell_0$ & $\ell_2$  & $\ell_\infty$\\ 
\cline{1-7}
True Positive & $4,\!723$ & $4,\!817$ & $984$
& $4,\!723$ & $4,\!803$ & $982$ \\ 
False Negative & $277$ & $183$ & $16$
& $277$ & $197$ & $18$ \\ 
True Negative & $4,\!239$ & $4,\!792$ & $837$
& $4,\!239$ & $4,\!809$ & $840$ \\ 
False Positive & $761$ & $208$ & $163$
& $761$ & $191$ & $160$  \\
Accuracy & $\boldsymbol{0.8962}$ & $\boldsymbol{0.9609}$ & $\boldsymbol{0.9105}$
& $\boldsymbol{0.8962}$ & $\boldsymbol{0.9612}$ & $\boldsymbol{0.9110}$ \\
Recall & $\boldsymbol{0.9446}$ & $\boldsymbol{0.9634}$ & $\boldsymbol{0.9840}$
& $\boldsymbol{0.9446}$ & $\boldsymbol{0.9606}$ & $\boldsymbol{0.9820}$ \\
Precision & $\boldsymbol{0.8612}$ & $\boldsymbol{0.9586}$ & $\boldsymbol{0.8579}$
& $\boldsymbol{0.8612}$ & $\boldsymbol{0.9617}$ & $\boldsymbol{0.8599}$ \\
Threshold & $5.0126$ & $0.00891$ & $0.0324$
& $5.0126$ & $0.008225$ & $0.0321$ \\
\cline{1-7}
\end{tabular}}
\caption{Sensitivity \& Specificity values for the CIFAR10. Choice no.\ 1 of thresholds is obtained by maximizing $0.5 \times \textit{recall} + 0.25 \times \textit{accuracy} + 0.25 \times \textit{precision}$, choice no.\ 2 is obtained by maximizing accuracy. 
}
\label{tab:CIFAR10sensspecvalues}
\end{table}

%
In \cref{tab:CIFAR10sensspecvalues} we report those metrics for two choices of threshold computation for $p=q$. Choice no.\ 1 is obtained by $0.5 \times \textit{recall} + 0.25 \times \textit{accuracy} + 0.25 \times \textit{precision}$, choice no.\ 2 is obtained by maximizing accuracy. While choice no.\ 1 rather aims at detecting many adversarial attacks, therefore accepting to produce additional false positive detections, choice no.\ 2 attributes the same importance to the reduction of false positives and false negatives.
In accordance to our findings in \cref{fig:CIFAR10vio} (left) we observe high accuracy, recall and precision values. In particular the $\ell_2$ attack shows very strong detection results. The corresponding AUROC valus of up to $99.12\%$ are stated in \cref{fig:CIFAR10vio} (right).
Noteworthily, out of 10,000 images we only obtain 16 false positive detections, i.e., images that are falsely accused to be attacked, and 56 false negatives, i.e., attacked images that are not detected when using threshold choice no.\ 1.
For the $\ell_\infty$ attack we observe an over production of false positives. However, most of the 1,000 attacked images were still detected.


\if 0
\begin{figure*}[t]
    \centering
    \scalebox{1}{
    \includegraphics[width=0.44\linewidth]{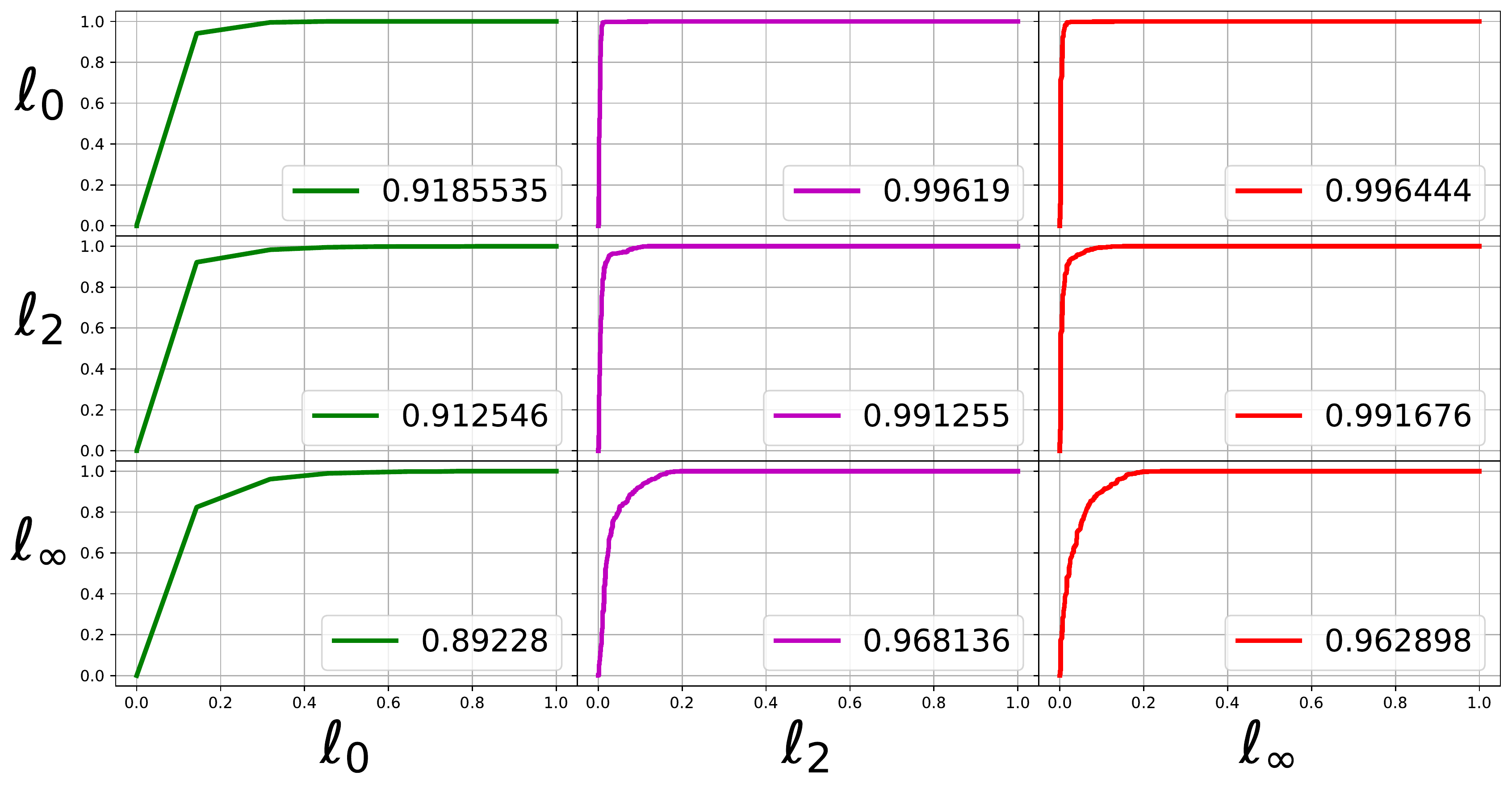}
    \hspace{4ex}
        \includegraphics[width=0.44\linewidth]{AUROC/new_cr_att_auroc_without_l1.pdf}
    }
    \caption{ROC curves and AUROC values for the different cross attacks performed on CIFAR10 (left) and ImageNet2012 (right). The task is to separate $D_{q}$ and $\bar{D}_{p,q}$, the $\ell_q$ norm is the one used for detection, $q=0,2,\infty$ is the row index and $p=0,2,\infty$ the column index.}
    \label{fig:crossedAttacksCIFAR10}
\end{figure*}
\fi

\subsection{CIFAR10 Targeted Attacks}
So far, all results presented have been computed for untargetted attacks. In principle a targeted attack $\gamma_p$ only increases the distances $\mathit{dist}_q ( \gamma_p(\bar{x}) , \bar{x})$ while the distance measures $\mathit{dist}_q ( \gamma_q(\gamma_p(\bar{x})) , \gamma_q(\bar{x}))$ corresponding to another untargeted attack $\gamma_q$ 
are supposed to remain unaffected. Thus, targeted attacks should be even easier to detect as confirmed by \cref{fig:TargettedCIFAR10}. 
However, intuitively the counter attack might be less successful in returning to the original class than in the untargeted case. For the untargeted primary attack on CIFAR10 with $p=q=2$ we obtained a return rate of $99.94\%$ (cf.\ \cref{tab:classFlips} in the main text), this number is indeed reduced to $73.32\%$ when performing a targeted primary attack.

\begin{figure}[tb]
    \centering
    \includegraphics[width=0.28\textwidth]{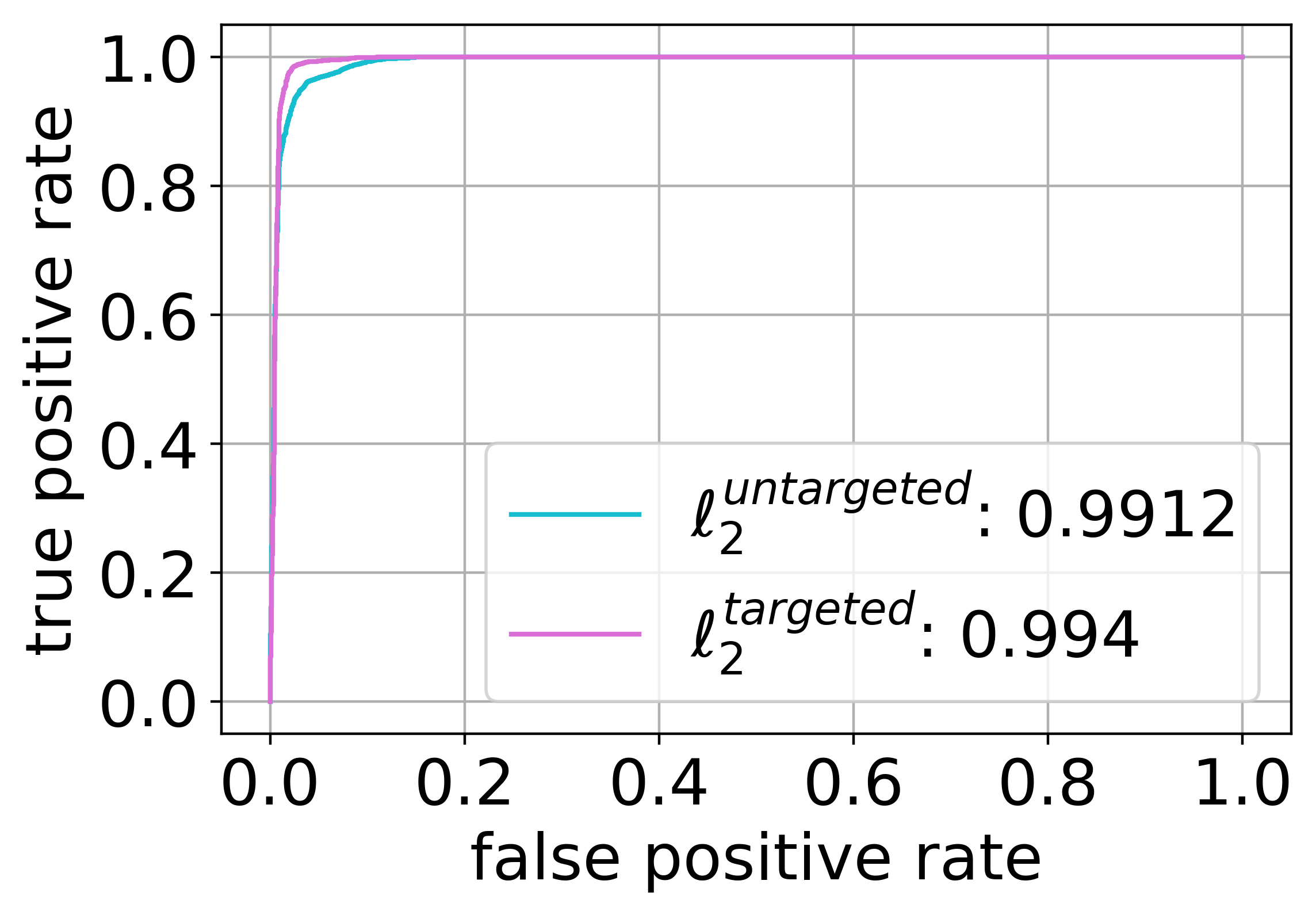}
    \caption{ROC curves and AUROC values for $\ell_{2}$ detections on CIFAR10 data where the first attack was once targeted and once untargeted.}
    \label{fig:TargettedCIFAR10}
\end{figure}

\begin{table}[t]
\centering
\scalebox{0.75}{
\begin{tabular}{||c|c|c|c|c||}
\cline{1-5}
method & metric & $\%$ & dataset & attack \\
\cline{1-5}
DBA & accuracy & 97.9 & CIFAR10 & CW $\ell_{2}$ \\
\cite{undercover} & & & & \\
\cline{1-5}
Defense-GAN & pm${}^*$ & 85 & CelebA & CW $\ell_{2}$ \\
\cite{DBLP:journals/corr/abs-1805-06605} 
& pm${}^*$ & 96 & F-MNIST & CW $\ell_{2}$ \\
\cline{1-5}
\cite{DBLP:journals/corr/abs-1902-04818}
& accuracy & 91.6 & CIFAR10 & CW $\ell_{2}$ \\
& accuracy & 96.1 & CIFAR10 & PGD $\ell_{2}$ \\
\cline{1-5}
I-defender & AUROC & 79.3 & CIFAR10 & PGD $\ell_{2}$ \\
\cite{Zheng:2018:RDA:3327757.3327888} & & & & \\
\cline{1-5}
\cite{DBLP:journals/corr/abs-1905-11475}
& AUROC & 99.7 & CIFAR10 & PGD $\ell_{2}$ \\
\cline{1-5}
\textbf{ours}  & accuracy & 96.1 & CIFAR10  & CW $\ell_{2}$ \\
 & AUROC & 99.1 & CIFAR10 & CW $\ell_{2}$ \\
\cline{1-5}
\end{tabular} }
\caption{Comparison with state of the art detection methods. *: the shorthand pm stands for performance maintenance and denotes the network's accuracy under attack divided by the original accuracy. Note that $100\%$ pm does not require a 100\% detection accuracy.}
\label{tab:SOTA}
\end{table}

\subsection{Comparison with State-of-the-Art}

Recent works report their numbers in a high heterogeneity w.r.t.\ considered attacks, evaluation metrics and datasets. This makes a clear comparison difficult. \Cref{tab:SOTA} summarizes a comparison with state-of-the-art detection methods. Our approach is situated in the upper part of the spectrum. PDG $\ell_2$ denotes the projected gradient descent method where gradients are normalized in an $\ell_2$ sense, see  \cite{Madry2017TowardsDL}. This attack is often considered in tests, however for most detection methods PDG $\ell_2$ might be easier to detect than the CW attack as PDG $\ell_2$ does not aim at minimizing the perturbation strength. For the detection by attack (DBA) approach \cite{undercover} which is similar to ours, it was observed empirically that eager attacks like CW are easier to detect than others. In light of our theoretical statements this observation is to be expected. The results presented by \cite{Worzyk1,Worzyk2} are limited to MNIST where the detection of attacks is simpler. Their finindgs are in line with those in \cite{undercover}, for the setting where both attacks are CW attacks, the authors of \cite{Worzyk1,Worzyk2} state a return rate of 99.6\% which is in line with what we observe.

\printbibliography

\end{document}